\def\eqref#1{equation~\ref{#1}}
\def\1{\mathbbm{1}}
\DeclareMathAlphabet{\mathsfit}{\encodingdefault}{\sfdefault}{m}{sl}
\SetMathAlphabet{\mathsfit}{bold}{\encodingdefault}{\sfdefault}{bx}{n}
\newcommand{\TV}{D_{\mathrm{TV}}}
\newtheorem{theorem}{Theorem}
\newtheorem{lemma}{Lemma}
\newtheorem{assumption}{Assumption}
\def\dy{{P}}
\def\bl{{\mathcal{T}}}
\def\alg{{\texttt{DRPO}}}
\DeclareRobustCommand*{\IEEEauthorrefmark}[1]{\raisebox{0pt}[0pt][0pt]{\textsuperscript{\footnotesize #1}}}
\def\BibTeX{{\rm D\kern-.05em{\sc i\kern-.025em b}\kern-.08em
    T\kern-.1667em\lower.7ex\hbox{E}\kern-.125emX}}
\begin{document}

\title{Federated Offline Policy Optimization with\\Dual Regularization
}

\author{
\IEEEauthorblockN{
Sheng Yue\IEEEauthorrefmark{1}, 
Zerui Qin\IEEEauthorrefmark{1}, 
Xingyuan Hua\IEEEauthorrefmark{2}, 
Yongheng Deng\IEEEauthorrefmark{1}, 
Ju Ren\IEEEauthorrefmark{1}\IEEEauthorrefmark{3}\IEEEauthorrefmark{*}
\thanks{\textsuperscript{*}Corresponding author}}
\IEEEauthorblockA{
\IEEEauthorrefmark{1}Department of Computer Science and Technology, BNRist, Tsinghua University, Beijing, China \\
\IEEEauthorrefmark{2}School of Computer Science and Technology, Beijing Institute of Technology, Beijing, China \\ 
\IEEEauthorrefmark{3}Zhongguancun Laboratory, Beijing, China \\
\href{mailto:shengyue@tsinghua.edu.cn,dyh19@tsinghua.edu.cn,renju@tsinghua.edu.cn}{\texttt{\{shengyue,zeruiqin,dyh19,renju\}@tsinghua.edu.cn}}, \href{mailto:xingyuanhua@bit.edu.cn}{\texttt{xingyuanhua@bit.edu.cn}}
}}

\maketitle

\begin{abstract}

Federated Reinforcement Learning (FRL) has been deemed as a promising solution for intelligent decision-making in the era of Artificial Internet of Things. However, existing FRL approaches often entail repeated interactions with the environment during local updating, which can be prohibitively expensive or even infeasible in many real-world domains. To overcome this challenge, this paper proposes a novel offline federated policy optimization algorithm, named \alg{}, which enables distributed agents to collaboratively learn a decision policy only from private and static data without further environmental interactions. \alg{} leverages dual regularization, incorporating both the local behavioral policy and the global aggregated policy, to judiciously cope with the intrinsic two-tier distributional shifts in offline FRL. Theoretical analysis characterizes the impact of the dual regularization on performance, demonstrating that by achieving the right balance thereof, \alg{} can effectively counteract distributional shifts and ensure strict policy improvement in each federative learning round. Extensive experiments validate the significant performance gains of \alg{} over baseline methods.

\end{abstract}


\section{Introduction}
\label{sec:introduction}



With the rapid proliferation of intelligent decision-making applications and the ongoing advancement of Federated Learning (FL)~\cite{mcmahan2017communication,li2020sample,xu2021edge}, \emph{Federated Reinforcement Learning} (FRL) has garnered a surge of interest recently, whereby distributed agents can join forces to learn a decision policy without the need to share their raw trajectories~\cite{liu2019lifelong,nadiger2019federated,zhuo2020federated,fan2021fault,khodadadian2022federated}. FRL is expected to cope with the data hungry of centralized Reinforcement Learning (RL) while complying with the requirement of privacy preservation or data confidentiality~\cite{khodadadian2022federated}. It has demonstrated remarkable potential across a wide range of real-world systems, including robot navigation~\cite{liu2019lifelong}, resource management in networking~\cite{yu2020deep}, and control of IoT devices~\cite{lim2020federated}.

Albeit with promising applications, the practical deployment of current FRL approaches is hampered by a major challenge: \emph{during local updating, agents require repeated environmental interaction to collect online experience with the latest policy.} However, interacting with real systems can be slow and fragile, which can lead to inefficient policy synchronization and lengthy wall-clock training time. More importantly, conducting such online interaction may not be feasible in many settings due to the high costs or risks involved~\cite{levine2020offline}. 
For instance, in healthcare, an unreliable treatment policy may have harmful effects on patients, and in power systems, a suboptimal energy management policy could result in severe economic losses. To overcome the limitation, this work studies \emph{offline FRL} that allows distributed agents to collaboratively build a decision policy only from (private) static data, such as historical records of human driving or doctors' diagnoses, without further online exploration. This data-driven policy learning fashion aligns better with the existing FL paradigm and is appealing for data reuse and safety-sensitive domains.

\begin{figure}[t]
    \centering
    \includegraphics[width=0.995\linewidth]{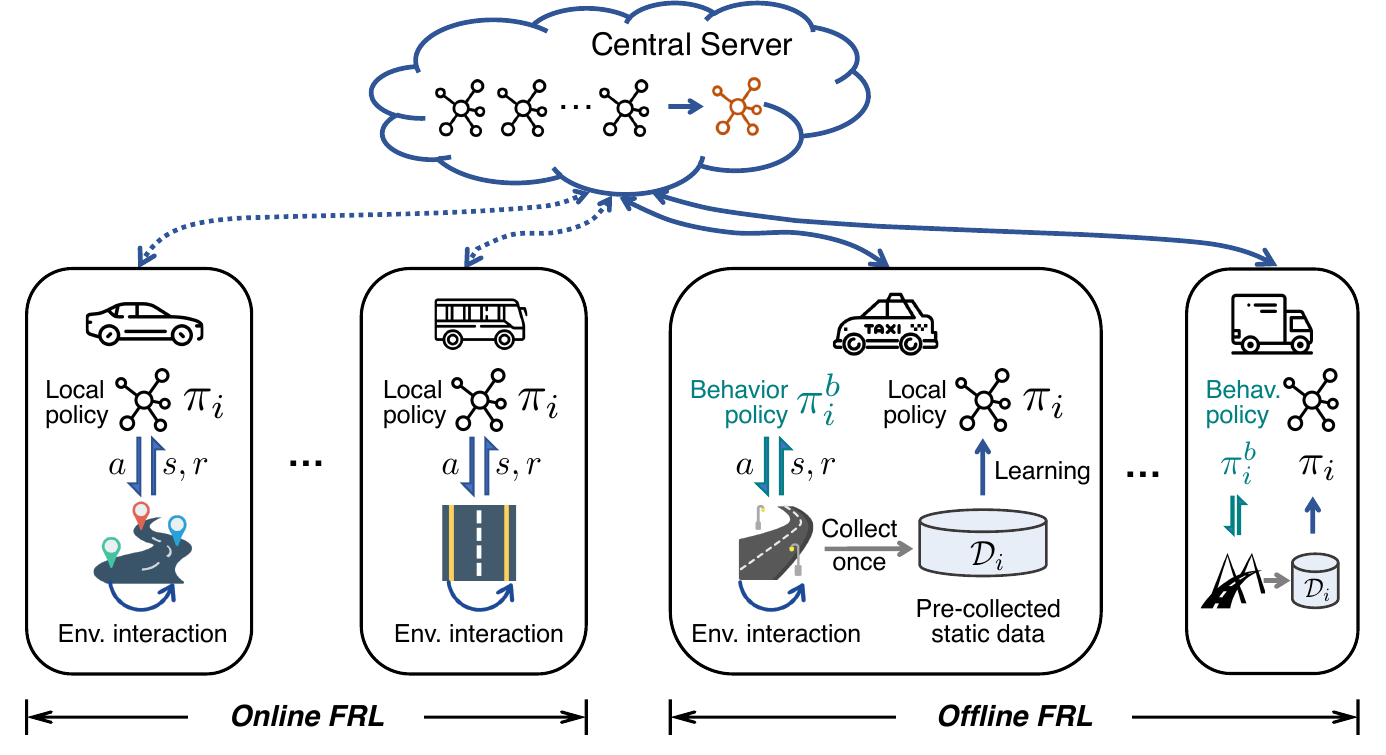}
    \vspace{-2.25em}
    \caption{Online FRL vs Offline FRL.}
    \vspace{-2.0em}
\label{fig:system_model}
\end{figure}

Unfortunately, achieving effective offline FRL is highly nontrivial due to the intrinsic issue of \emph{two-tier distributional shifts}. 
On one hand, the state-action distribution of the local dataset, on which each agent updates policies, can deviate increasingly from that of the learned policy, leading to severe extrapolation errors during offline learning~\cite{levine2020offline}.  On the other hand, significant heterogeneity may exist in the data distributions across agents, stemming from the potential variations in data collection policies, which would introduce substantial bias during local policy updating~\cite{kairouz2021advances}. If left unchecked, these two-tier distributional mismatches will hinder the effectiveness and efficiency of policy optimization in offline FRL, limiting its practical applicability and potential impact. 



In this paper, we introduce a new offline FRL algorithm,  named \emph{\underline{d}oubly \underline{r}egularized federated offline \underline{p}olicy \underline{o}ptimization} (\alg{}), that leverages dual regularization, one based on the local behavioral state-action distribution and the other on the global aggregated policy, to judiciously solve the above-mentioned dilemma. Specifically, the first regularization can incorporate \emph{conservatism} into the local learning policy to ameliorate the effect of extrapolation errors. The second can confine the local policy around the global policy to impede over-conservatism induced by the first regularizer and enhance the utilization of the aggregated information. We theoretically analyze the effects of the dual regularization and demonstrate  that by striking the right balance thereof, \alg{} can effectively combat the distributional mismatches and achieve strict improvement in policy updating. 

The main contribution of this paper is summarized below:
\begin{itemize}[topsep=1pt]
    \item We devise a doubly regularized federated offline policy optimization algorithm, named \alg{}, capable of  extracting valuable policies from distributed static data without any online environmental interaction.
    \item We rigorously characterize the impacts of the introduced dual regularization and establish strict policy improvement guarantees for \alg{} via balancing the right tradeoff between the two regularizers.
    \item Our extensive experimental results showcase that \alg{} significantly outperforms the baselines on standard offline RL benchmarks in terms of performance and communication efficiency (often within 20 communication rounds).
\end{itemize}

\section{Related Work}
\label{sec:related_work}


A growing body of literature has investigated FRL, with the aim of safely co-train policies for overcoming data sharing constraints~\cite{wang2023attrleaks}. Nadiger et al.~\cite{nadiger2019federated} propose an FRL approach that combines \texttt{DQN}~\cite{hasselt2010double} and \texttt{FedAvg}~\cite{mcmahan2017communication} to achieve personalized policies for individual players in the game of Pong. Liu et al.~\cite{liu2019lifelong} propose a continual FRL method specifically designed for robot navigation problems, allowing new robots to utilize the shared policy to speed up training. Lim et al.~\cite{lim2020federated} propose an FRL algorithm on top of Proximal Policy Optimization (\texttt{PPO})~\cite{schulman2017proximal} and \texttt{FedAvg}. Liang et al.~\cite{liang2019federated} retrofit Deep Deterministic Policy Gradient (\texttt{DDPG})~\cite{lillicrap2015continuous} for \texttt{FedAvg} in the context of autonomous driving. Cha et al.~\cite{cha2019federated} introduce a privacy-preserving variant of policy distillation, where they exchange the proxy experience memory (including pre-arranged states and time-averaged policies) instead of raw data. Zhuo et al.~\cite{zhuo2020federated} propose a two-agent FRL framework with discrete state-action spaces, where agents share local encrypted (by Gaussian differentials) Q-values and alternately update the global Q-network by multilayer perceptron (MLP). Anwar
and Raychowdhury~\cite{anwar2021multi} study the adversarial attack issue in FRL, with the goal of training a unified policy for individual tasks. Khodadadian et al.~\cite{khodadadian2022federated} establish convergence guarantees for the combination of \texttt{FedAvg} and classical \texttt{Q-learning}. Fan et al.~\cite{fan2021fault} develop a fault-tolerant FRL algorithm that can handle scenarios where a certain percentage of participating agents may experience random system failures or act as adversarial attackers. They  analyze the sample complexity of the proposed algorithm. However, these existing FRL works concentrate on the online setting where agents require extensive interaction with the environment. In contrast, the focus of this paper lies in offline FRL that collaboratively conducts policy learning without active online exploration. 

Recently, Zhou et al.~\cite{zhou2022federated} explore a similar formulation for the dynamic treatment regime (DTR) problem, but under a strong assumption of episodic linear MDP. 
Shen et al.~\cite{shen2023distributed} and, Rengarajan et al.~\cite{rengarajan2023federated} introduce independent offline federated policy optimization methods, whereas they neglect the important issue of two-tier distributional shifts along with the delicate tradeoff involved. These algorithms may fail to capture global information during local updates, leading to poor learning efficiency, particularly in high-dimensional environments. Of note, there has been a number of efforts that focus on single-agent offline RL in the RL realm~\cite{levine2020offline,kumar2020conservative,kostrikov2021offline,yu2021combo}. However, it has been reported that the naive federation approach (using the offline RL objective as the local objective and applying \texttt{FedAvg}) suffers from an unsatisfactory performance, even worse than individual offline RL~\cite{rengarajan2023federated}.

\section{Background and Challenge}
\label{sec:background}

In this section, we introduce the necessary background and elaborate on the concrete challenges to motivate our method.

\subsection{Markov Decision Processes and Reinforcement Learning}

The problem of RL is typically formulated as a Markov Decision Process (MDP), specified by $M\doteq\langle\mathcal{S},\mathcal{A},\dy,R,\mu_0,\gamma\rangle$ with state space $\mathcal{S}$,  action space $\mathcal{A}$, transition dynamics  $\dy:\mathcal{S}\times\mathcal{A}\rightarrow\mathcal{P}(\mathcal{S})$, reward function $R:\mathcal{S}\times\mathcal{A}\rightarrow\mathcal{P}([0,R_{\max}])$, initial state distribution $\mu_0$, and discount factor $\gamma\in(0,1)$. A policy maps states to distributions over actions, represented as $\pi:\mathcal{S}\rightarrow\mathcal{P}(\mathcal{A})$. The occupancy measure of policy $\pi$ is defined as $\rho^{\pi}(s,a)\doteq(1-\gamma)\sum^{\infty}_{h=0}\gamma^h\Pr(s_h=s | \dy,\pi,\mu)\pi(a|s)$. The objective of RL can be expressed as maximizing expected cumulative rewards: 
\begin{align}
    \label{eq:objective}
    \max_{\pi}J(M,\pi)\doteq\mathbb{E}\Bigg[\sum^{\infty}_{h=0}\gamma^{h} r(s_h,a_h)\mid T,\mu,\pi_\theta\Bigg],
\end{align}
or equivalently $\max_{\pi\in\Pi} \mathbb{E}_{s,a\sim\rho^{\pi}}[R(s,a)/(1-\gamma)]$~\cite{sutton2018reinforcement}. Additionally, we define the action-value function (Q-function) of policy $\pi$ as $Q^{\pi}(s,a)\doteq \mathbb{E}[\sum^{\infty}_{t=0}\gamma^t R(s_t,a_t)| s_0=s,a_0=a]$.

\subsection{Offline Federated Reinforcement Learning}

As depicted in \cref{fig:system_model}, offline FRL
solves Problem (\ref{eq:objective}) in a distributed and offline fashion, where ${n}$ distributed agents federatively build a policy under the orchestration of a central server, without sharing their raw trajectories or further interacting with the environment. Let $\mathcal{D}_i\doteq\{(s_{i,j},a_{i,j},r_{i,j},s'_{i,j})\}^{D_i}_{j=1}$ represent the private local dataset of agent $i\in\{1,\dots,n\}$, where $(s_{i,j},a_{i,j},r_{i,j},s'_{i,j})$ is a transition tuple sampled from an unknown policy. The goal is to find the best possible policy from distributed offline datasets, $\{\mathcal{D}_i\}^n_{i=1}$, to maximize Problem (\ref{eq:objective}). We define $\mathcal{D}_i(s,a)\doteq\{(s',a')\in\mathcal{D}_i\mid s'=s,a'=a\}$ and the \emph{behavioral policy} induced by $\mathcal{D}_i$ as follows:
\begin{align}
    \pi^b_i(a|s)\doteq
    \begin{cases}
        \frac{\sum_{(\tilde s,\tilde a)\in\mathcal{D}_i}\1((\tilde s,\tilde a)=(s,a))}{\sum_{\tilde s\in\mathcal{D}_i}\1(\tilde s=s)},&\text{if}~s\in\mathcal{D}_i;\\
        \frac{1}{|\mathcal{A}|},&\text{else}.
    \end{cases}
\end{align}


\subsection{Challenges}
\label{sec:challenges}

\begin{figure}[t]
    \centering
    \subfigure{\includegraphics[width=0.475\columnwidth]{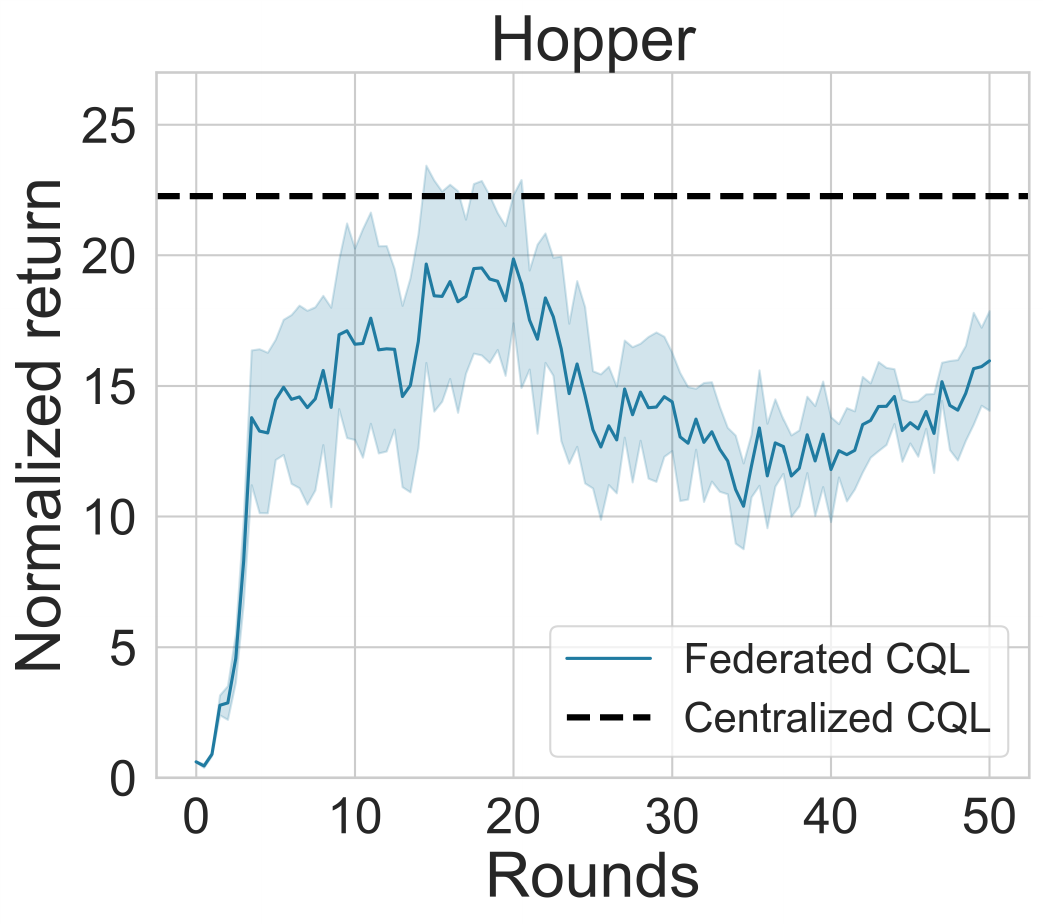}}
    \hspace{1pt}
    \subfigure{\includegraphics[width=0.475\columnwidth]{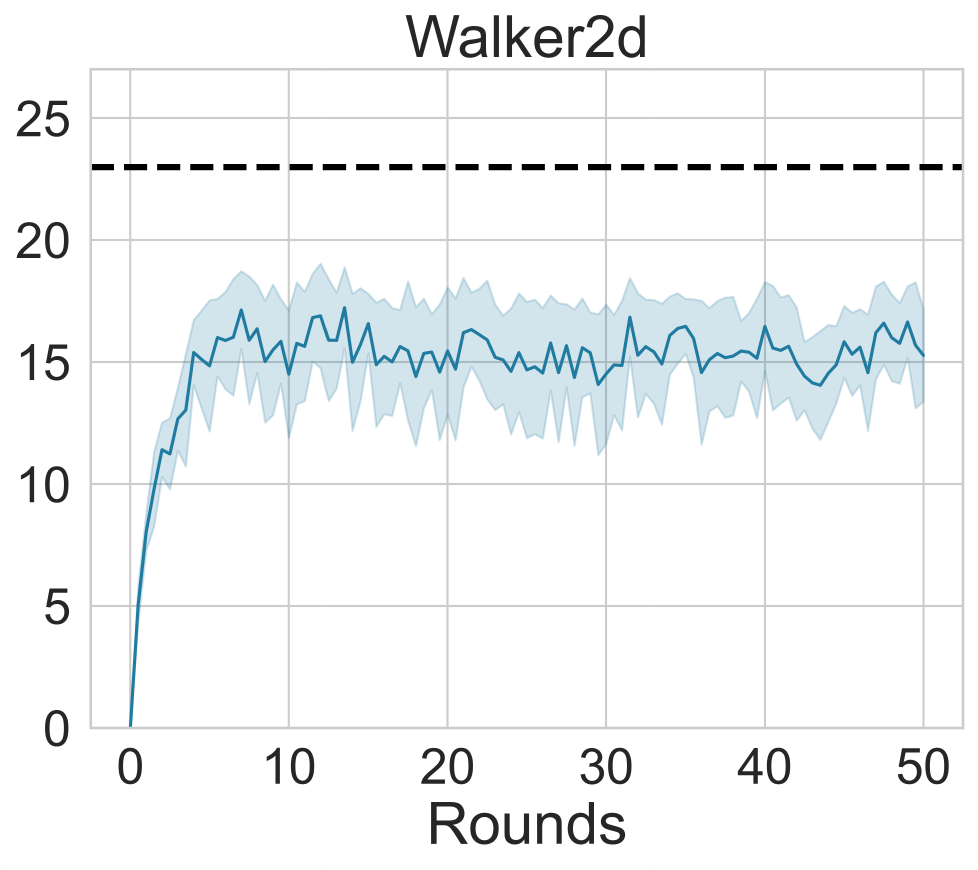}}
    \vspace{-1.0em}
    \caption{Performance comparison of centralized \texttt{CQL} and naive federated \texttt{CQL} on two challenging MuJoCo tasks. Federated \texttt{CQL} follows \texttt{FedAvg} by using the \texttt{CQL} objective as the local objective. In Federated \texttt{CQL}, the number of agents is set as 10, and each agent has 5 \texttt{medium-replay} trajectories sampled from D4RL~\cite{fu2020d4rl}. The centralized \texttt{CQL} uses all 50 trajectories.}
    \vspace{-1.0em}
    \label{fig:motivating}
\end{figure}

A fundamental challenge in offline FRL is the \emph{locally distributional shift}, that is, the deviation between state-action distribution $\rho^{\pi}(s,a)$ of the current learned policy and that of $\mathcal{D}_i$. As the locally learned policies increasingly deviate from $\pi^b_i$, this deviation would be exacerbated over the course of learning and could lead to significant extrapolation error in policy evaluation.
Intuitively, a straightforward solution for this issue is to repurpose well-established batch RL approaches (e.g., \texttt{CQL}~\cite{kumar2020conservative} and \texttt{MOPO}~\cite{yu2020mopo}) to operate in FL settings. For example, each agent can carry out multiple \texttt{CQL} updating steps, followed by periodically aggregating the parameters of policies and critics in the server. Unfortunately, as demonstrated in \cref{fig:motivating} and \cite[Section 4]{rengarajan2023federated}, the solution easily suffers from instability and poor performance, and this sort of combinations may perform even worse than individual offline RL in many tasks. The underlying reason is that the inherent \emph{conservatism} in offline RL methods would easily underestimate locally out-of-distribution state-actions and overfit local datasets, failing to leverage valuable global information from aggregation.\footnote{The conservatism means compelling the learning policy to stay close to the offline data manifold, which is a commonly used principle in the algorithmic design to deal with the extrapolation errors in offline RL.} It reveals that, in addition to the locally distributional mismatch, offline FRL should carefully deal with the \emph{global distributional shift} caused by the data heterogeneity among agents. The presence of these distributional mismatches makes federated offline policy optimization highly challenging.

\begin{figure}[ht]
    \centering
    \vspace{-.5em}
    \includegraphics[width=0.9\linewidth]{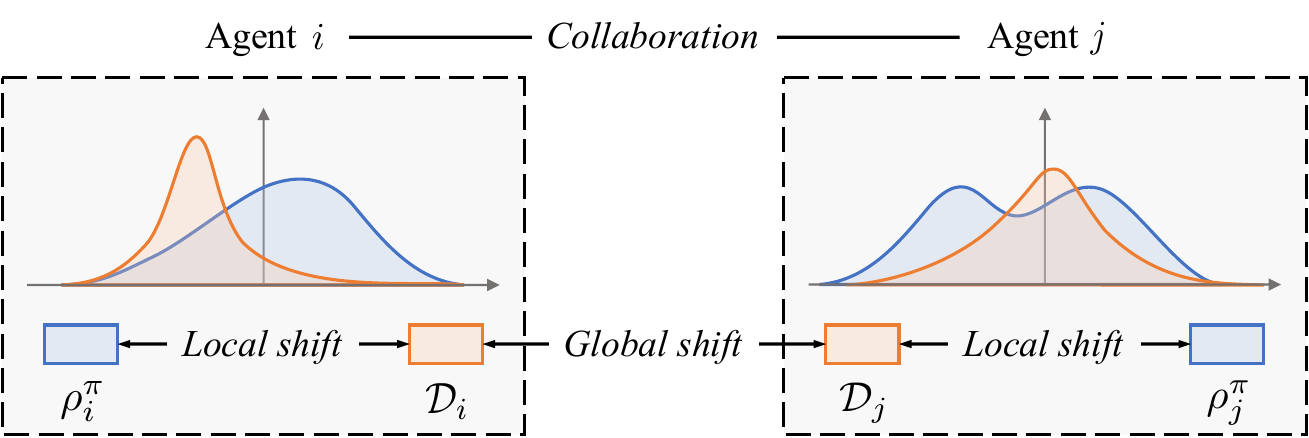}
    \vspace{-0.5em}
    \caption{An illustration of the two-tier distributional shifts. $\mathcal{D}_i$ and $\mathcal{D}_j$ represent the state-action (data) distributions in local datasets. $\rho^\pi_i$ and $\rho^\pi_j$ denote the state-action distributions of local policies $\pi_i$ and $\pi_j$ in the environment.}
    \vspace{-0.5em}
    \label{fig:shifts}
\end{figure}

\section{Federated Offline Policy Optimization}

The aforementioned dilemma implies a subtle tradeoff that has to be delicately calibrated in the policy optimization of offline FRL. To combat the local distributional shift, the agent's updating policy should be performed more \emph{on local data support}, because the local data has no knowledge of the out-of-distribution states and actions. At the same time, the local policy also requires prudently \emph{escaping the local data manifold} to incorporate the valuable knowledge from other agents. To this end, inspired by the recent progress in meta-RL~\cite{lin2022model}, we next present \emph{\underline{d}oubly \underline{r}egularized federated offline \underline{p}olicy \underline{o}ptimization} (\alg{}), that exploits dual regularization, one based on the local behavioral policy and the other on the global aggregated policy, to achieve this balance. 

Specifically, \alg{} alternates between local updating and global aggregation for a number of rounds, as outlined below. 

\textbf{\emph{1) Local updating:}}
Given aggregated policy $\bar{\pi}$, each agent $i\in[n]$ locally improves the policy by solving the following regularized policy optimization problem:
\begin{align}
    \label{eq:local_updating_pi}
    \max_{\pi}\mathbb{E}_{s\sim\mathcal{D}_i,a\sim\pi(\cdot|s)}\big[\widetilde{Q}^{\pi}(s,a)\big] - \lambda_{1} D(\pi,\pi^b_i) - \lambda_{2} D(\pi,\bar{\pi}),
\end{align}
where $\widetilde{Q}^\pi$ is the estimated Q-value for $\pi$, and $\lambda_1,\lambda_2\ge0$ are non-negative coefficients. In \cref{eq:local_updating_pi}, we introduce a weighted interpolation of two regularization terms with behavioral policy $\pi^b_i$ and global policy $\bar{\pi}$. The first regularization, $D(\pi_i,\pi^b_i)$, incorporates conservatism into the local learning policy to ameliorate the effect of extrapolation errors. The second one, $D(\pi_i,\bar{\pi})$, constrains the local policy around the global policy to impede \emph{over-conservatism} induced by the first regularizer and enhance the utilization of aggregated information. 

Regarding $\widetilde{Q}^\pi$, we obtain its value by repeating the following off-policy evaluation process, as in \cite{kumar2020conservative}:
\begin{align}
    \label{eq:cql}
    \widetilde{Q}^\pi \leftarrow& \arg \min_{Q} \frac{1}{2} \mathbb{E}_{s,a\sim \mathcal{D}_i}\Big[\big(Q(s, a)-\widetilde{\bl}^\pi_i \widetilde{Q}^\pi(s, a)\big)^{2}\Big]\nonumber\\
    +&\beta\Big(\mathbb{E}_{s,a \sim\tilde{\rho}^\pi_i}\big[Q(s, a)\big]-\mathbb{E}_{s, a \sim \mathcal{D}_i}\big[Q(s, a)\big]\Big).
\end{align}
where $\tilde{\rho}^\pi_i(s,a)\doteq \mathcal{D}_i(s)\pi(a|s)$, and $\beta\ge0$ is a coefficient. $\widetilde{\bl}^\pi_i$ represents the empirical Bellman operator, defined by
\begin{align}
    \label{eq:empirical_bellman}
    \widetilde{\bl}^\pi_i Q(s,a)\doteq \mathbb{E}_{s,a,r,s'\sim\mathcal{D}_i} \big[r +  \gamma\mathbb{E}_{a'\sim\pi(\cdot|s')}[Q(s',a')]\big],
\end{align}
In \cref{eq:cql}, the first quadratic term is the standard Bellman updating, and the second term penalizes the action-value for actions not observed in $\mathcal{D}_i$ to alleviate the value overestimation on out-of-distribution state-actions.

After local updates, each agent transmits the local policy to the server for policy aggregation.

\textbf{\emph{2) Global aggregation:}}
Once receiving local policies from agents, the server carries out aggregation:
\begin{align}
    \label{eq:aggregation}
    \bar{\pi}\leftarrow\frac{1}{n}\sum\nolimits^n_{i=1}\pi_i,
\end{align}
where we slightly abuse notation and denote $\bar{\pi},\pi_i$ as the corresponding policy parameters. Then, the server sends the aggregated policy back to agents and starts the next round.

The pseudocode of \alg{} is outlined in Alg.~\ref{alg:drpo}. Thanks to the regularizers, the processes of policy evaluation and policy improvement are expected to be carried out in multiple steps, while retaining the global information throughout (thousands of local gradient steps per round work well in our experiment). As a consequence, \alg{} enjoys notable communication efficiency (often converges within 20 rounds in experiments). 

\begin{algorithm}[t]
    \LinesNumbered
    \KwIn{$\lambda_1$, $\lambda_2$, $\beta$, $\{\mathcal{D}\}^n_{i=1}$}
    Serve initializes policy $\bar{\pi}$ and sends it to all agents\;
    \For{$t=1$ \KwTo $T$}{
        \For{$i=1$ \KwTo $N$}{
            \tcp{Local updating}
            Agent $i$ locally updates the policy and critic multiple steps according to \cref{eq:cql,eq:local_updating_pi}\;
            Agent $i$ transmits updated $\pi_i$ to the server\;
        }
        \tcp{Global aggregation}
        After receiving local policies, server aggregates them by \cref{eq:aggregation} and distributes to all agents\;
    }
    \label{alg:drpo}
    \caption{\alg{}}
\end{algorithm}





\section{Theoretical Analysis}
\label{sec:theory}

This section seeks to answer the following key question in theory: 
can \alg{} counteract the distributional shifts and achieve effective policy improvement in each round? 



\subsection{Notations and Assumptions}
\label{sec:assumptions}

We begin by introducing the key notations and assumptions used in the analysis. Notably, since the following analysis operates in each agent, we omit subscript $i$ for conciseness. For any $s,s'\in\mathcal{S}$ and $a\in\mathcal{A}$, define the empirical transition matrix and reward function as follows:
\begin{align}
    \widetilde{\dy}(s'|s,a)&\doteq \sum_{(s_i,a_i,s'_i)\in\mathcal{D}}\frac{\1((s_i,a_i,s'_i)=(s,a,s'))}{|\mathcal{D}(s,a)|},
    \\
    \widetilde{R}(s,a)&\doteq\sum_{(s_i,a_i,r_i)\in\mathcal{D}}\frac{r_i\cdot\1((s_i,a_i)=(s,a))}{|\mathcal{D}(s,a)|},
\end{align}
where we assume the cardinality of a state-action pair in $\mathcal{D}$ is non-zero, by setting $|\mathcal{D}(s,a)|/|\mathcal{D}|\ge\delta$ when $(s,a)\notin\mathcal{D}$, to prevent any trivial bound~\cite{kumar2020conservative}. We impose the standard concentration assumption in analyzing  RL algorithms \cite{agarwal2019reinforcement}. 
\begin{assumption}
    \label{asmp:concentration}
    For any $s,a\in\mathcal{D}$, with probability greater than $1-\delta$, the following relationship holds:
    \begin{align}
        \big\|\widetilde{\dy}(\cdot|s,a)-\dy(\cdot|s,a)\big\|_{1} &\leq \frac{C_{\dy, \delta}}{\sqrt{|\mathcal{D}(s,a)|}},
        \\
        \big\|\widetilde{R}(s,a)-R(s,a)\big\|_1 &\leq \frac{C_{R, \delta}}{\sqrt{|\mathcal{D}(s, a)|}},
    \end{align}
    where $C_{\dy, \delta}$ and $C_{R, \delta}$ are positive constants depending on $\delta$ via a $\sqrt{\log(1/\delta)}$ dependency.
\end{assumption}
\cref{asmp:concentration} reasons about the estimation errors of empirical transitions and rewards in terms of data sizes. Based on this and the fact $Q(s,a)\le R_{\max}/(1-\gamma)$, we can obtain the following relationship with probability greater than $1-\delta$:
\begin{align}
    &\big|\widetilde{\bl}^\pi Q(s,a) - \bl^\pi Q(s,a)\big|\nonumber\\
    \le\;& \Big|\widetilde{R}(s,a) - R(s,a) + \gamma\sum\nolimits_{s'}\big(\widetilde{\dy}(s'|s,a)-P(s'|s,a)\big)\nonumber\\
    &\cdot\mathbb{E}_{a'\sim\pi(\cdot|s')}\big[Q(s',a')\big]\Big|\tag{from definitions of $\widetilde{\bl}^\pi,\bl^\pi$}
    \\
    \le\;&\frac{(1-\gamma)C_{R, \delta} + \gamma R_{\max}C_{\dy, \delta}}{(1-\gamma)\sqrt{|\mathcal{D}(s,a)|}},
\end{align}
which bounds the estimation error induced by the empirical Bellman operator. Throughout this section, we let $D$ represent the maximum total variation distance, i.e.,
\begin{align}
    D(\pi_1,\pi_2)\doteq\max_s \TV(\pi_1(\cdot\vert s)\Vert \pi_2(\cdot\vert s)).
\end{align}
In addition, when clear from the context, we use $\pi$ to denote the optimal solution to Problem (\ref{eq:local_updating_pi}).

\subsection{Main Results: Strict Policy Improvement}
\label{sec:main_result}

Based on \cref{asmp:concentration}, we have the following policy improvement result for \alg{}.

\begin{theorem}
    \label{thm:policy_improvement}
    Suppose that \cref{asmp:concentration}, $(1-\delta_\pi)\lambda_2<\lambda_1<\lambda_2$, and $\delta_\pi>0$ (defined in Eq.~(\ref{eq:delta_pi})) holds. Then, the optimal solution $\pi^*$ of Problem (\ref{eq:local_updating_pi}) satisfies
    \begin{align}
        J(M,\pi^*)>\max\{J(M,\bar{\pi}),J(M,\pi^b)\},
    \end{align}
    with probability great than $1-2\delta$ and a sufficient large $\lambda_2$.
\end{theorem}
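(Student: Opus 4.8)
The plan is to separate the argument into two stages: a purely variational stage that turns the optimality of $\pi^*$ into a strict improvement of the regularized \emph{surrogate} objective, and a statistical stage that transfers this improvement to the true return $J(M,\cdot)$ via \cref{asmp:concentration}. Write $F(\pi)\doteq\mathbb{E}_{s\sim\mathcal{D},a\sim\pi}[\widetilde{Q}^\pi(s,a)]$ for the value part of Problem~(\ref{eq:local_updating_pi}). Since $\pi^*$ maximizes $F(\pi)-\lambda_1 D(\pi,\pi^b)-\lambda_2 D(\pi,\bar\pi)$, evaluating this objective at the two feasible comparators $\bar\pi$ and $\pi^b$ and using $D(\bar\pi,\bar\pi)=D(\pi^b,\pi^b)=0$ gives
\begin{align*}
    F(\pi^*)-F(\bar\pi)&\ge \lambda_1\big[D(\pi^*,\pi^b)-D(\bar\pi,\pi^b)\big]+\lambda_2 D(\pi^*,\bar\pi),\\
    F(\pi^*)-F(\pi^b)&\ge \lambda_1 D(\pi^*,\pi^b)+\lambda_2\big[D(\pi^*,\bar\pi)-D(\pi^b,\bar\pi)\big].
\end{align*}
These two inequalities are the backbone of the proof.

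Next I would extract strict positivity from the right-hand sides via the triangle inequality for the total-variation metric $D$. For the comparison against $\bar\pi$, the reverse triangle inequality $D(\pi^*,\pi^b)\ge D(\bar\pi,\pi^b)-D(\pi^*,\bar\pi)$ collapses the first bound to
\begin{align*}
    F(\pi^*)-F(\bar\pi)\ge(\lambda_2-\lambda_1)\,D(\pi^*,\bar\pi),
\end{align*}
which is strictly positive because $\lambda_1<\lambda_2$ and $\pi^*\neq\bar\pi$. The comparison against $\pi^b$ is more delicate, since the naive reverse triangle inequality only yields the vacuous $(\lambda_1-\lambda_2)D(\pi^*,\pi^b)<0$. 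This is exactly where $\delta_\pi$ enters: I expect Eq.~(\ref{eq:delta_pi}) to define $\delta_\pi$ through $D(\pi^b,\bar\pi)-D(\pi^*,\bar\pi)=(1-\delta_\pi)\,D(\pi^*,\pi^b)$, i.e.\ $\delta_\pi$ is the normalized slack in the triangle inequality and quantifies how far $\pi^*$ departs from merely interpolating between $\pi^b$ and $\bar\pi$. Substituting this sharper relation into the second bound gives
\begin{align*}
    F(\pi^*)-F(\pi^b)\ge\big[\lambda_1-(1-\delta_\pi)\lambda_2\big]\,D(\pi^*,\pi^b),
\end{align*}
which is strictly positive precisely under the hypothesis $(1-\delta_\pi)\lambda_2<\lambda_1$; the assumption $\delta_\pi>0$ is what keeps the admissible window $\big((1-\delta_\pi)\lambda_2,\lambda_2\big)$ for $\lambda_1$ nonempty and forces $\pi^*$ genuinely off the data manifold.

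The crux is converting these surrogate gaps into gaps for the true objective $J(M,\cdot)$. First I would characterize $\widetilde{Q}^\pi$ from the stationarity condition of \cref{eq:cql}: its minimizer satisfies, pointwise, $\widetilde{Q}^\pi=\widetilde{\bl}^\pi\widetilde{Q}^\pi-\beta\big(\tilde\rho^\pi/d_{\mathcal{D}}-1\big)$, where $d_{\mathcal{D}}$ is the empirical state–action distribution; hence $F(\pi)$ equals the empirical-MDP return of $\pi$ started from the data states, minus a conservatism penalty that \emph{vanishes} at $\pi=\pi^b$ (where $\tilde\rho^{\pi^b}=d_{\mathcal{D}}$) and grows with $D(\pi,\pi^b)$. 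Second, I would use \cref{asmp:concentration} together with the Bellman-error bound derived above to replace the empirical return by the true return $J(M,\cdot)$, a substitution that holds with probability at least $1-2\delta$ after a union bound over the transition and reward concentration events. Third, a performance-difference / simulation-lemma argument shows that both the conservatism-penalty mismatch and the residual statistical error between the two compared policies are controlled by their mutual total-variation distance (first order in $D(\pi^*,\bar\pi)$, respectively $D(\pi^*,\pi^b)$), with constants that do not grow with $\lambda_2$.

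It then remains to balance scales. Dividing the two improvement bounds by the common positive factor $D(\pi^*,\bar\pi)$ (resp.\ $D(\pi^*,\pi^b)$), the net true-return improvement is lower bounded by the regularization margin $\lambda_2-\lambda_1$ (resp.\ $\lambda_1-(1-\delta_\pi)\lambda_2$) minus a fixed combination of the penalty coefficient $\beta$ and the concentration constants $C_{\dy,\delta},C_{R,\delta}$. Choosing $\lambda_2$ sufficiently large while holding $\lambda_1/\lambda_2$ in the prescribed window makes the margin dominate these $\lambda_2$-independent error terms, delivering $J(M,\pi^*)>\max\{J(M,\bar\pi),J(M,\pi^b)\}$. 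I expect the statistical stage to be the main obstacle: showing that the CQL penalty mismatch and the empirical-to-true transfer error are genuinely first order in the policy displacement (rather than an irreducible bias) is where \cref{asmp:concentration}, the conservatism penalty of \cref{eq:cql}, and the mismatch between the data state distribution and the discounted visitation measure must all be reconciled.
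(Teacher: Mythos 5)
Your scaffolding matches the paper's proof closely: both halves start from the optimality of $\pi^*$ in Problem~(\ref{eq:local_updating_pi}) tested against the comparators $\bar{\pi}$ and $\pi^b$; both identify the surrogate value with the empirical return minus the CQL penalty, i.e.\ $F(\pi)\approx J(\widetilde{M},\pi)-\beta g(\tilde{\rho}^\pi)/(1-\gamma)$ via the stationarity of \cref{eq:cql}; both transfer empirical to true returns through \cref{asmp:concentration} (the paper's \cref{lem:interval}, with errors $\tilde{\eta},\bar{\eta},\eta^b$ and a union bound giving $1-2\delta$); and both control the penalty mismatch first order in policy displacement through the occupancy-measure bound (\cref{lem:occupancy_bound}), producing the constant $B=\beta L/(1-\gamma)^2$. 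Your comparison against $\bar{\pi}$, yielding the margin $(\lambda_2-\lambda_1-B)D(\pi^*,\bar{\pi})$, is essentially the paper's \cref{eq:eq7} and the display following it.

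The genuine gap is your guessed definition of $\delta_\pi$, and it is not cosmetic. Eq.~(\ref{eq:delta_pi}) defines $\delta_\pi\doteq\beta\big(g(\tilde{\rho}^{\pi})-g(\tilde{\rho}^{b})\big)/\big(\lambda_2(1-\gamma)D(\pi,\pi^b)\big)$, the \emph{normalized conservatism-penalty gap}, whose positivity is a substantive assumption justified by $\tilde{\rho}^b(s,a)\approx d(s,a)$ — not the triangle-inequality slack $D(\pi^b,\bar{\pi})-D(\pi^*,\bar{\pi})=(1-\delta_\pi)D(\pi^*,\pi^b)$ that you posit (which is automatically nonnegative and carries no information about the penalty). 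This matters in your behavioral half: you treat the penalty mismatch $g(\tilde{\rho}^{\pi^*})-g(\tilde{\rho}^{b})$ as an error to be bounded in absolute value, discarding its sign, whereas the paper keeps it as a signed \emph{gain}, $\frac{\beta}{1-\gamma}\big(g(\tilde{\rho}^{\pi})-g(\tilde{\rho}^{b})\big)=\lambda_2\delta_\pi D(\pi,\pi^b)$, and combines it with the plain triangle inequality $D(\pi,\pi^b)+D(\pi,\bar{\pi})\ge D(\bar{\pi},\pi^b)$ to get $\sigma^b\ge\big(\lambda_1-(1-\delta_\pi)\lambda_2\big)D(\pi,\pi^b)>0$ under the stated hypothesis. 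Under your reading the margin is $\lambda_1-(1-\delta'_\pi)\lambda_2$ with $\delta'_\pi$ the geometric slack, and the theorem's hypotheses give no lower bound on $\delta'_\pi$: at triangle equality ($\delta'_\pi=0$) your margin is $\lambda_1-\lambda_2<0$, and subtracting your worst-case penalty term $B\,D(\pi^*,\pi^b)$ only worsens it, while the paper's bound remains strictly positive there. So the improvement over $\pi^b$ does not go through as written; the repair is exactly to use the signed penalty gap, which is what the actual definition in Eq.~(\ref{eq:delta_pi}) supplies. (A minor shared looseness: both you and the paper implicitly need the displacements $D(\pi^*,\bar{\pi})$ and $D(\pi^*,\pi^b)$ to dominate the sampling errors $\tilde{\eta},\bar{\eta},\eta^b$, which are fixed concentration terms depending on dataset coverage — not first order in the policy displacement, as your statistical stage suggests.)
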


\noindent\textbf{\emph{Remarks.}} \cref{thm:policy_improvement} reveals that the local policy learned by \alg{} exhibits `strict' performance improvement over both local behavioral policy and global policy during each local updating phase. The result indicates that \alg{} can extract valuable policy from distributed offline data while accommodating data of different characteristics. Note that effective policy improvement is contingent upon a relatively large $\lambda_2$. This finding aligns with our empirical observations in \cref{fig:motivating} and underscores the crucial role of a constraint associated with the aggregated model.

In subsequent subsections, we provide proof for \cref{thm:policy_improvement}.

\subsection{Improvement over aggregated policy}

First, we show that \alg{} can achieve strict improvement over the aggregated policy, $\bar{\pi}$. Denote $d(s,a)\doteq|\mathcal{D}(s,a)|/|\mathcal{D}|$ as the empirical state-action distribution of local dataset $\mathcal{D}$. Then, from \cref{eq:cql} , the Q-function found by approximate dynamic programming in iteration $k$ can be obtained by setting the derivation of \cref{eq:cql} to 0:
\begin{align}
    Q^{k+1}(s,a) = \widetilde{\bl}^\pi Q^k(s,a) - \beta\cdot\frac{\tilde{\rho}^\pi(s,a)-d(s,a)}{d(s,a)},
\end{align}
which penalizes the Bellman equation by an additional penalty to alleviate the overestimation on out-of-distribution actions. We define the expected penalty on the Q-value under $\tilde{\rho}^\pi(s,a)$ as
\begin{align}
    \label{eq:eq3}
    {g}(\tilde{\rho}^\pi)\doteq\mathbb{E}_{(s,a)\sim\tilde{\rho}^\pi}\bigg[\frac{\tilde{\rho}^\pi(s,a)-d(s,a)}{d(s,a)}\bigg].
\end{align}
Then, it can be seen that Problem (\ref{eq:local_updating_pi}) is equivalent to the following problem:
\begin{align}
    \max_\pi J(\widetilde{M},\pi) - \frac{\beta{g}(\tilde{\rho}^\pi)}{1-\gamma} - \lambda_1 D(\pi,\pi^b) - \lambda_2 D(\pi,\bar{\pi}),
\end{align}
where  $\widetilde{M}\doteq\langle\mathcal{S},\mathcal{A},\widetilde{\dy},\widetilde{R},\mu,\gamma\rangle$ is the empirical MDP induced by $\mathcal{D}_i$. Since the learning is fully carried out on fixed data $\mathcal{D}$, we require the relationship between the returns of policy $\pi$ on empirical MDP $\widetilde{M}$ and underlying MDP $M$. Accordingly, we provide the following lemma adapted from \cite[Lemma A.2]{yu2021combo}.
\begin{lemma}
    \label{lem:interval}
    For any policy $\pi$, the following fact holds with the probability greater than $1-\delta$:
    \begin{align}
        J(M,\pi) - \eta \le J(\widetilde{M},\pi) \le J(M,\pi) + \eta,
    \end{align}
    where $\eta_i$ is denoted as
    \begin{align*}
        \eta\doteq\;&\frac{\gamma }{1-\gamma}\Big|\mathbb{E}_{s,a\sim\rho^\pi,a'\sim\pi(\cdot|s')}\Big[ \sum_{s'}\big(\widetilde{\dy}(s'|s,a)-P(s'|s,a)\big)\nonumber\\
        &\cdot Q^\pi(s',a')\Big]\Big| + \frac{1}{1-\gamma}\mathbb{E}_{s,a\sim\rho^\pi}\Big[\big| \widetilde{R}(s,a) - R(s,a) \big|\Big].
    \end{align*}
\end{lemma}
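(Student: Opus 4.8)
The plan is to establish the two-sided bound by proving the single inequality $\lvert J(M,\pi) - J(\widetilde{M},\pi)\rvert \le \eta$, which immediately yields the sandwich. This is a \emph{simulation-lemma} argument: $M$ and $\widetilde{M}$ share the same state/action spaces, initial distribution $\mu$, and discount $\gamma$, differing only in their transition kernels ($P$ vs.\ $\widetilde{\dy}$) and rewards ($R$ vs.\ $\widetilde{R}$), so the gap between their returns must be attributable entirely to these two sources of deviation. First I would write $J(M,\pi)-J(\widetilde{M},\pi)=\mathbb{E}_{s_0\sim\mu}[V^\pi_M(s_0)-V^\pi_{\widetilde{M}}(s_0)]$, reducing the claim to a bound on the per-state value difference $\Delta(s)\doteq V^\pi_M(s)-V^\pi_{\widetilde{M}}(s)$.

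Next I would expand $\Delta$ recursively through the two Bellman equations. Subtracting the Bellman equation for $V^\pi_{\widetilde{M}}$ from that for $V^\pi_M$ and inserting the cross term $\gamma\sum_{s'}P(s'|s,a)V^\pi_{\widetilde{M}}(s')$ (adding and subtracting it) splits $\Delta$ into an instantaneous reward-deviation term $(R-\widetilde{R})(s,a)$, an instantaneous transition-deviation term $\gamma\sum_{s'}(P-\widetilde{\dy})(s'|s,a)V^\pi_{\widetilde{M}}(s')$, and a residual $\gamma\sum_{s'}P(s'|s,a)\Delta(s')$ that carries the recursion forward under the \emph{true} kernel $P$. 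Unrolling the residual along trajectories generated by $P$ collapses the geometric series into the occupancy measure $\rho^\pi$ of the true MDP, yielding the exact identity
\begin{align*}
J(M,\pi)-J(\widetilde{M},\pi) =\;& \tfrac{1}{1-\gamma}\mathbb{E}_{(s,a)\sim\rho^\pi}\big[(R-\widetilde{R})(s,a)\big] \\
&+ \tfrac{\gamma}{1-\gamma}\mathbb{E}_{(s,a)\sim\rho^\pi}\Big[\textstyle\sum_{s'}(P-\widetilde{\dy})(s'|s,a)\,V^\pi_{\widetilde{M}}(s')\Big].
\end{align*}
The $1/(1-\gamma)$ prefactor is exactly the normalization in the definition of $\rho^\pi$, and the extra $\gamma$ on the transition term comes from the one-step look-ahead. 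Writing $V^\pi_{\widetilde{M}}(s')=\mathbb{E}_{a'\sim\pi(\cdot|s')}[Q^\pi(s',a')]$ recovers the $Q^\pi$-form appearing in $\eta$.

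Finally, taking absolute values and applying the triangle inequality to separate the reward and transition contributions bounds the right-hand side above by the two summands defining $\eta$; since the same argument run on $J(\widetilde{M},\pi)-J(M,\pi)$ produces the matching lower bound, the sandwich $J(M,\pi)-\eta\le J(\widetilde{M},\pi)\le J(M,\pi)+\eta$ follows. I expect the main obstacle to be the bookkeeping in the telescoping step: one must unroll against the true kernel $P$ (so that the occupancy is $\rho^\pi$, consistent with the definition given earlier) while the value function that survives in the transition term is that of the \emph{empirical} MDP, and then decide at which stage to identify it with $Q^\pi$ — matching the precise form of $\eta$ requires tracking both MDPs' value functions carefully, which is exactly the point where the adaptation of \cite[Lemma A.2]{yu2021combo} must be verified. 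The high-probability qualifier $1-\delta$ is inherited only because the statement is subsequently combined with \cref{asmp:concentration} to bound the reward and transition deviations; the identity and triangle-inequality step themselves hold deterministically given the dataset.
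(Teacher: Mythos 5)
Your proof is correct and is essentially the paper's own argument: the paper's proof is a one-line citation of the telescoping lemma (Lemma A.2 of Yu et al.'s COMBO paper), whose content is exactly your Bellman-equation decomposition with the inserted cross term, unrolled under the true kernel $P$ so that the true occupancy $\rho^\pi$ survives alongside the empirical value function $V^\pi_{\widetilde{M}}$, followed by the triangle inequality. You also correctly flag the two points the paper's statement glosses over, namely that the $Q^\pi$ appearing in $\eta$ must be read as the action-value function of the empirical MDP $\widetilde{M}$ (via $V^\pi_{\widetilde{M}}(s')=\mathbb{E}_{a'\sim\pi(\cdot|s')}[Q^\pi_{\widetilde{M}}(s',a')]$) rather than the true-MDP $Q^\pi$ defined in the background section, and that the inequality itself holds deterministically given the dataset, the $1-\delta$ qualifier entering only when $\eta$ is subsequently bounded using \cref{asmp:concentration}.
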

\begin{proof}
    Letting $M_1=M_2=\widetilde{M}$ and $f=1$ in \cite[Lemma A.2]{yu2021combo} yield the result.
\end{proof}
\cref{lem:interval} bounds the difference between the returns on empirical MDP and true MDP from above and below. Built on \cref{lem:interval} and \cite[Proof of Theorem 2]{yu2021combo}, the following holds with probability greater than $1-\delta$:
\begin{align}
    \label{eq:eq1}
    J(\widetilde{M},\pi)\le J(M,\pi) +  \tilde{\eta},
\end{align}
where $\tilde{\eta}$ is defined as
\begin{align}
    \tilde{\eta} \doteq\;&\frac{2\gamma R_{\max} C_{P,\delta}}{(1-\gamma)^2}\mathbb{E}_{s\sim d}\bigg[ \sqrt{D_\mathrm{CQL}(s;\pi,\pi^b)|\mathcal{A}|/|\mathcal{D}|} \bigg]\nonumber\\
    & + \frac{C_{R,\delta}}{1-\gamma}\mathbb{E}_{s,a\sim\rho^{\pi}}\Big[ 1/\sqrt{|\mathcal{D}(s,a)|}\Big],
\end{align}
with $D_\mathrm{CQL}(s;\pi_1,\bar{\pi})\doteq 1 + \sum_a \pi_1(a|s)(\pi_1(a|s)/\bar{\pi}(a|s)-1)$. Similarly, we can bound the return of $\bar{\pi}$ in the empirical MDP from below with respect to its return in the underlying MDP with probability greater than $1-\delta$:
\begin{align}
    \label{eq:eq2}
    J(\widetilde{M},\bar{\pi})\ge J(M,\bar{\pi}) - \bar{\eta},
\end{align}
where $\bar{\eta}$ is defined as
\begin{align}
    \bar{\eta} \doteq\;&\frac{2\gamma R_{\max} C_{P,\delta}}{(1-\gamma)^2}\mathbb{E}_{s\sim d}\bigg[ \sqrt{D_\mathrm{CQL}(s;\bar{\pi},\pi^b)|\mathcal{A}|/|\mathcal{D}(s)|} \bigg]\nonumber\\
    & + \frac{C_{R,\delta}}{1-\gamma}\mathbb{E}_{s,a\sim\rho^{\bar{\pi}}}\Big[ 1/\sqrt{|\mathcal{D}(s,a)|}\Big].
\end{align}
Based on \cref{eq:eq1,eq:eq2}, if $\pi$ is the optimal solution to Problem (\ref{eq:local_updating_pi}), the following holds:
\begin{align}
    &J(M,\pi) +  \tilde{\eta} - \frac{\beta{g}(\tilde{\rho}^{\pi})}{1-\gamma} - \lambda_1 D(\pi,\pi^b) - \lambda_2 D(\pi,\bar{\pi})\nonumber\\
    \ge& J(\widetilde{M},\pi) - \frac{\beta{g}(\tilde{\rho}^{\pi})}{1-\gamma} - \lambda_1 D(\pi,\pi^b) - \lambda_2 D(\pi,\bar{\pi})\nonumber\\
    \ge& J(\widetilde{M},\bar{\pi}) - \frac{\beta{g}(\tilde{\rho}^{\bar{\pi}})}{1-\gamma} - \lambda_1 D(\bar{\pi},\pi^b)\tag{from optimality of $\pi$}\\
    \ge&J(M,\bar{\pi}) - \bar{\eta} - \frac{\beta{g}(\tilde{\rho}^{\bar{\pi}})}{1-\gamma} - \lambda_1 D(\bar{\pi},\pi^b).
    \label{eq:eq9}
\end{align}
This gives us a lower bound on $J(M,\pi)$ in terms of $J(M,\bar{\pi})$:
\begin{align}
    J(M,\pi) &\ge J(M,\bar{\pi}) - \bar{\eta} - \tilde{\eta}  + \bar{\sigma},
    \label{eq:eq8}
\end{align}
where $\bar{\sigma}$ is defined as
\begin{align}
    \bar{\sigma}&\doteq \frac{\beta\big({g}(\tilde{\rho}^{\pi}) - {g}(\tilde{\rho}^{\bar{\pi}})\big)}{1-\gamma}+ \lambda_1 D(\pi,\pi^b) \nonumber\\
    &+ \lambda_2 D(\pi,\bar{\pi}) - \lambda_1 D(\bar{\pi},\pi^b).
    \label{eq:sigma_bar}
\end{align}
To ensure policy improvement over global policy $\bar{\pi}$, we proceed to establish $\bar{\sigma}- \bar{\eta} - \tilde{\eta}>0$. Regarding ${g}(\tilde{\rho}^{\pi}) - {g}(\tilde{\rho}^{\bar{\pi}})$ in \cref{eq:sigma_bar}, using the definition of ${g}(\cdot)$ yields 
\begin{align}
    &\big|{g}(\tilde{\rho}^{\pi}) - {g}(\tilde{\rho}^{\bar{\pi}})\big|\nonumber\\
    =&\Bigg|\sum_{s,a}\tilde{\rho}^{\pi}(s,a)\bigg(\frac{\tilde{\rho}^{\pi}(s,a)}{d(s,a)}-1\bigg)-\tilde{\rho}^{\bar{\pi}}(s,a)\bigg(\frac{\tilde{\rho}^{\bar{\pi}}(s,a)}{d(s,a)}-1\bigg)\Bigg|\nonumber\\
    \le&\Bigg|\sum_{s,a}\tilde{\rho}^{\pi}(s,a)\bigg(\frac{\tilde{\rho}^{\pi}(s,a)}{d(s,a)}-1\bigg)-\tilde{\rho}^{\bar{\pi}}(s,a)\bigg(\frac{\tilde{\rho}^{\pi}(s,a)}{d(s,a)}-1\bigg)\Bigg|\nonumber\\
    +&\Bigg|\sum_{s,a}\tilde{\rho}^{\bar{\pi}}(s,a)\bigg(\frac{\tilde{\rho}^{\pi}(s,a)}{d(s,a)}-1\bigg)-\tilde{\rho}^{\bar{\pi}}(s,a)\bigg(\frac{\tilde{\rho}^{\bar{\pi}}(s,a)}{d(s,a)}-1\bigg)\Bigg|\tag{from the triangle inequality}\\
    =&\Bigg|\sum_{s,a}\Big(\tilde{\rho}^{\pi}(s,a)-\tilde{\rho}^{\bar{\pi}}(s,a)\Big)\cdot\frac{\tilde{\rho}^{\pi}(s,a)-d(s,a)}{d(s,a)}\Bigg|\nonumber\\
    +&\Bigg|\sum_{s,a}\tilde{\rho}^{\bar{\pi}}(s,a)\cdot\bigg(\frac{\tilde{\rho}^{\pi}(s,a)}{d(s,a)}-\frac{\tilde{\rho}^{\bar{\pi}}(s,a)}{d(s,a)}\bigg)\Bigg|\tag{arranging terms}\\
    \le&\frac{2}{\delta}\sum_{s,a}\bigg|\tilde{\rho}^{\pi}(s,a)-\tilde{\rho}^{\bar{\pi}}(s,a)\bigg|,
\end{align}
where the last inequality holds from $d(s,a)\ge\delta$ (see \cref{sec:assumptions}). Letting $L\doteq4/\delta$, we have
\begin{align}
    \big|{g}(\tilde{\rho}^{\pi}) - {g}(\tilde{\rho}^{\bar{\pi}})\big|\le L\TV(\tilde{\rho}^{\pi}\|\tilde{\rho}^{\bar{\pi}}).
    \label{eq:eq10}
\end{align}
\cref{eq:eq10} gives an upper bound on the difference between the expected penalties induced by $\pi$ and $\bar{\pi}$, with respect to the difference between the corresponding state-action distributions. Drawing upon  \cref{eq:eq10,lem:occupancy_bound}, we get
\begin{align}
    \big|{g}(\tilde{\rho}^{\pi}) - {g}(\tilde{\rho}^{\bar{\pi}})\big|\le \frac{L}{1-\gamma}\max_s \TV(\pi(\cdot\vert s)\Vert \bar{\pi}(\cdot\vert s)).
    \label{eq:eq11}
\end{align}
Substituting \cref{eq:eq11} in \cref{eq:sigma_bar}, we have
\begin{align}
    \bar{\sigma}\doteq\,&  \lambda_1 D(\pi,\pi^b) + \lambda_1 D(\pi,\bar{\pi})- \lambda_1 D(\bar{\pi},\pi^b) \nonumber\\
    +\,& (\lambda_2 - \lambda_1)D(\pi,\bar{\pi}) + \frac{\beta\big({g}(\tilde{\rho}^{\pi}) - {g}(\tilde{\rho}^{\bar{\pi}})\big)}{1-\gamma}  \nonumber\\
    \ge\,& (\lambda_2 - \lambda_1)D(\pi,\bar{\pi}) +  \frac{\beta\big({g}(\tilde{\rho}^{\pi}) - {g}(\tilde{\rho}^{\bar{\pi}})\big)}{1-\gamma}  \nonumber\\
    \ge\,& \bigg(\lambda_2 - \lambda_1- \frac{\beta L}{(1-\gamma)^2}\bigg)D(\pi,\bar{\pi}).
    \label{eq:eq7}
\end{align}
Letting $B\doteq\beta L/(1-\gamma)^2$ and plugging \cref{eq:eq7} in \cref{eq:eq8}, with probability greater than $1-\delta$, we yield
\begin{align*}
    J(M,\pi) &\ge J(M,\bar{\pi}) - \bar{\eta} - \tilde{\eta}  +  (\lambda_2 - \lambda_1- B )D(\pi,\bar{\pi}).
\end{align*}
Clearly, $\bar{\eta},\tilde{\eta}$ are independent of $\lambda_1,\lambda_2$. Hence, when $\lambda_1<\lambda_2$, proper values of $\lambda_1,\lambda_2$  can lead to the performance improvement over the global policy:
\begin{align}
    J(M,\pi)>J(M,\bar{\pi}).
    \label{eq:improvement_g}
\end{align}

\subsection{Improvement over behavioral policy}

Next, we show the learned local policy achieves improvement over behavior policy $\pi^b$. Analogous to \cref{eq:eq1}, the following holds with probability greater than $1-\delta$:
\begin{align}
    J(\widetilde{M},\pi^b)\ge J(M,\pi^b) -  \eta^b,
\end{align}
where $\eta^b$ is defined as
\begin{align}
    \eta^b &\doteq\frac{2\gamma R_{\max} C_{P,\delta}}{(1-\gamma)^2}\cdot\mathbb{E}_{s\sim d}\Big[\sqrt{|\mathcal{A}|/|\mathcal{D}(s)|} \Big] \nonumber\\
    &+ \frac{C_{R,\delta}}{1-\gamma}\cdot\mathbb{E}_{s,a\sim\rho^b}\left[ 1/\sqrt{|\mathcal{D}(s,a)|}\right].
\end{align}
As in \cref{eq:eq9}, based on \cref{eq:eq1,eq:eq2}, if $\pi$ is the optimal solution to Problem (\ref{eq:local_updating_pi}), we can write
\begin{align}
    &J(M,\pi) +  \tilde{\eta} - \frac{\beta{g}(\tilde{\rho}^{\pi})}{1-\gamma} - \lambda_1 D(\pi,\pi^b) - \lambda_2 D(\pi,\bar{\pi})\nonumber\\
    \ge& J(\widetilde{M},\pi) - \frac{\beta{g}(\tilde{\rho}^{\pi})}{1-\gamma} - \lambda_1 D(\pi,\pi^b) - \lambda_2 D(\pi,\bar{\pi})\nonumber\\
    \ge& J(\widetilde{M},\pi^b) - \frac{\beta{g}(\tilde{\rho}^{b})}{1-\gamma} - \lambda_2 D(\pi^b,\bar{\pi})\nonumber\\
    \ge&J(M,\pi^b) -  \eta^b - \frac{\beta{g}(\tilde{\rho}^{b})}{1-\gamma} - \lambda_2 D(\pi^b,\bar{\pi}).
\end{align}
Then, the following fact holds:
\begin{align}
    J(M,\pi) &\ge J(M,\pi^b) - \tilde{\eta}  - \eta^b  + \sigma^b,
\end{align}
where $\sigma^b$ is defined by
\begin{align}
    \sigma^b&\doteq \frac{\beta\big({g}(\tilde{\rho}^{\pi}) - {g}(\tilde{\rho}^{b})\big)}{1-\gamma}+ \lambda_2 D(\pi,\bar{\pi}) \nonumber\\
    & + \lambda_1 D(\pi,\pi^b) - \lambda_2 D(\bar{\pi},\pi^b).
\end{align}
Recall ${g}(\rho)=\mathbb{E}_{(s,a)\sim\rho}\Big[\frac{\rho(s,a)-d(s,a)}{d(s,a)}\Big]$. As noted in \cite{kumar2020conservative,lin2022model}, ${g}(\tilde{\rho}^{b})$ is expected to be smaller than ${g}(\tilde{\rho}^{\pi})$ in practice, due to the fact $\tilde{\rho}^b(s,a)\approx d(s,a)$. Accordingly, we denote
\begin{align}
    \delta_\pi\doteq  \frac{\beta\big({g}(\tilde{\rho}^{\pi}) - {g}(\tilde{\rho}^{b})\big)}{\lambda_2(1-\gamma)D(\pi,\pi^b)},
    \label{eq:delta_pi}
\end{align}
which is positive in this case. For $\lambda_1 \ge (1-\delta_\pi)\lambda_2$, we write
\begin{align}
    \sigma^b &= \lambda_2 \delta_{\pi} D(\pi,\pi^b)+ \lambda_2 D(\pi,\bar{\pi})+ \lambda_1 D(\pi,\pi^b) - \lambda_2 D(\bar{\pi},\pi^b)\nonumber\\
    &= (\lambda_2 \delta_{\pi}+ \lambda_1) D(\pi,\pi^b)+ \lambda_2 D(\pi,\bar{\pi})  - \lambda_2 D(\bar{\pi},\pi^b)\nonumber\\
    &= \lambda_2\big(D(\pi,\pi^b) + D(\pi,\bar{\pi}) - D(\bar{\pi},\pi^b)\big)\nonumber\\
    &+\big(\lambda_1 - (1-\delta_\pi)\lambda_2\big)D(\pi,\pi^b)\nonumber\\
    &> 0.
\end{align}
Thus, when $\lambda_1 > (1-\delta_\pi)\lambda_2$, appropriate $\lambda_1,\lambda_2$ can lead to the performance improvement over the local behavior policy:
\begin{align}
    J(M,\pi)>J(M,\pi^b).
    \label{eq:improvement_b}
\end{align}
Based on \cref{eq:improvement_g,eq:improvement_b}, \cref{thm:policy_improvement} can be easily obtained.



\section{Experiments}
\label{sec:experiments}

In this section, we use experimental studies to evaluate the proposed algorithm by answering the following questions: 
\begin{enumerate}
    \item How does \alg{} perform on the standard offline RL benchmarks in comparison to the baselines?
    \item How fast does \alg{} converge?
    \item How is the scalability of \alg{}?
    \item How does \alg{} perform given different numbers of dataset sizes and local updating steps?
    \item What are the impacts of $\lambda_1$ and $\lambda_2$ on performance?
\end{enumerate}

\subsection{Experimental Setup}
\label{sec:experimental_setup}

We detail the experimental setups to ensure reproducibility.
\subsubsection{Datasets}
We evaluate our algorithms using data from a standard offline RL benchmark, D4RL \cite{fu2020d4rl} (provided at \url{https://github.com/rail-berkeley/d4rl}, under the Apache License 2.0, based on the MuJoCo simulator \cite{todorov2012mujoco}). We conduct experiments on four challenging continuous control tasks, including HalfCheetah, Walker2d, Hopper, and Ant, with three dataset types (\texttt{expert}, \texttt{medium}, and \texttt{replay-medium}). In all experiments, the trajectories are sampled randomly without replacement from the D4RL datasets.


\begin{figure}[h]
    \vspace{-0.5em}
    \centering
    \includegraphics[width=0.95\linewidth]{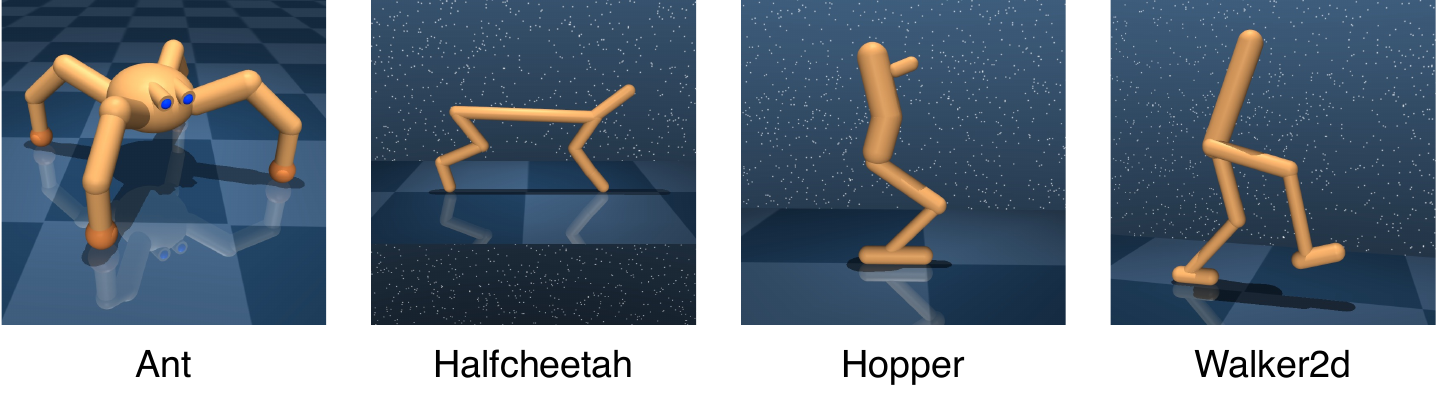}
    \vspace{-1.0em}
    \caption{Benchmark environments.}
    \vspace{-1em}
    \label{fig:mujoco}
\end{figure}

\subsubsection{Baselines}
We evaluate \alg{} against three baselines: 
\text{\emph{1) Conservative Q-Learning}} (\texttt{CQL}) \cite{kumar2020conservative}, a well-established model-free offline RL algorithm which we run individually on agents; 
\text{\emph{2) Federated Conservative Q-Learning}} (\texttt{Fed-CQL}), a combination between \texttt{FedAvg} and \texttt{CQL}, which runs CQL locally with some gradient steps followed by aggregating the policy parameters in the server; 
\text{\emph{3) Federated Behaviour Cloning}} (\texttt{Fed-BC}), the combination between \texttt{FedAvg} and Behavior Cloning, akin to \texttt{Fed-CQL}.

\begin{table}[H]
    \renewcommand\arraystretch{1.25}
    \centering
    \vspace{-1.0em}
    \caption{Hyperparameters in the experiment.} 
    \vspace{-0.5em}
    \begin{tabular}{lc} 
        \toprule
        \makebox[4.0cm][l]{Hyperparameter}                             & \makebox[1.5cm][c]{Value} \\ \midrule
        Actor learning rate                   & 3e-5  \\
        Critic learning rate                  & 3e-4  \\
        Optimizer                             & Adam  \\
        Discount factor ($\gamma$)            & 0.99  \\
        Regularization weight ($\lambda_1$) & 0.1  \\
        Regularization weight ($\lambda_2$)        & 0.2   \\
        Evaluation weight ($\beta$)        & 10.0   \\
        Batchsize                      & 256   \\ \bottomrule
    \end{tabular}
    \label{table:parameters}
\end{table}

\begin{table*}[htpb]
    \centering
    \renewcommand\arraystretch{1.25}
    \caption{Average scores on continuous control environments.}
    \begin{tabular}{@{}llrrrr@{}} 
        \toprule
        ~~\textbf{Environment}       & \textbf{Quality}  &\multicolumn{1}{c}{\textbf{\texttt{Fed-BC}}}  & \multicolumn{1}{c}{\textbf{\texttt{Fed-CQL}}} & \multicolumn{1}{c}{\textbf{\texttt{CQL}}} & \multicolumn{1}{c}{\textbf{\alg{} (ours)}} \\ \midrule
        ~~Ant            &expert        & $1329.34\pm203.29$ & $2555.60\pm200.63$ & $1923.07\pm264.01$ & $\bm{3312.55\pm371.67}$~~    \\
        ~~HalfCheetah    & expert        &$4549.21\pm991.48$ & $5413.32\pm479.85$ & $4026.87\pm477.23$ & $\bm{6928.75\pm469.80}$~~    \\
        ~~Hopper        &expert        &  $1086.63\pm244.13$ & $2167.49\pm168.09$ & $1544.23\pm186.98$ & $\bm{2767.99\pm363.72}$~~    \\
        ~~Walker2d      & expert        & $1991.56\pm307.49$ & $2986.63\pm320.35$ & $2095.37\pm290.27$ & $\bm{4201.57\pm303.76}$~~    \\
        ~~Ant            & medium        & $1146.27\pm207.75$ & $2267.02\pm543.30$ & $1707.91\pm243.87$  & $\bm{3021.74\pm452.37}$~~    \\
        ~~HalfCheetah   & medium        & $2584.25\pm484.18$ & $4137.23\pm254.72$ & $2348.82\pm339.80$& $\bm{5261.03\pm784.58}$~~    \\
        ~~Hopper        & medium        & $694.57\pm159.09$  & $1445.32\pm257.74$ & $1041.56\pm113.60$& $\bm{1854.73\pm159.89}$~~    \\
        ~~Walker2d        & medium        & $221.05\pm56.63$  & $2551.38\pm344.05$ & $1734.59\pm200.39$& $\bm{3216.83\pm227.83}$~~    \\
        ~~Ant             & medium-replay & $759.96\pm158.91$  & $1473.46\pm135.91$ & $758.74\pm89.09$ & $\bm{2156.90\pm214.06}$~~    \\
        ~~HalfCheetah    & medium-replay & $2179.53\pm499.52$ & $3573.09\pm266.89$ & $1139.83\pm142.93$ & $\bm{4396.15\pm453.39}$~~    \\
        ~~Hopper         & medium-replay & $221.26\pm66.63$  & $547.64\pm77.48$  & $364.41\pm47.31$  & $\bm{868.16\pm119.20}$~~    \\
        ~~Walker2d        & medium-replay & $269.58\pm11.60$  & $750.47\pm88.39$  & $706.61\pm80.51$  & $\bm{999.75\pm86.41}$~~    \\ \bottomrule
    \end{tabular}
    \label{table:performance}
\end{table*}

\subsubsection{Implementation}

Building on Pinsker's Inequality, we use the KL divergence instead of the total variation distance between policies in practice. The two regularizers in \cref{eq:local_updating_pi,eq:cql} can be expressed as $D(\pi,\pi^b_i) = \mathbb{E}_{s,a\in\mathcal{D}_i}[-\log\pi(a|s)]$ and $D(\pi,\bar{\pi}) = \mathbb{E}_{s\in\mathcal{D}_i,a\sim\bar{\pi}(\cdot|s)}[-\log\pi(a|s)]$, respectively. We represent the policy and critic both as a 2-layer feedforward neural network with 256 hidden units, and ReLU activation functions, where the policy network uses Tanh Gaussian outputs. The main hyperparameters used in experiments can be found in \cref{table:parameters}. We implement the code using Pytorch 1.8.1 and run experiments on Ubuntu 18.04.2 LTS with 8 NVIDIA GeForce RTX A6000 GPUs. The scores in the figures are normalized, and all results are averaged over 3 random seeds. 



\subsection{Experimental Result}
\label{sec:results}

\subsubsection{Comparative results} 
\label{sec:results_mujoco}
To answer the first and second questions, we evaluate \alg{} on various tasks with varying data qualities. We configure the number of agents and local updating steps to be 10 and 1k respectively. Each agent's dataset comprises 5 trajectories, each containing 1k transition tuples. We train all algorithms (with the same learning rate) until convergence and record the average score in \cref{table:performance}. Evidently, \alg{} yields the best performance by a substantial margin on all tasks. It suggests that \alg{} can effectively tackle the distributional shifts and abstract valuable information from distributed behavioral data. Further, we depict the learning curves in \cref{fig:convergence}. \alg{} exhibits the fastest convergence speed on each task (often within 20 communication rounds), highlighting the great communication efficiency of \alg{}. 


\subsubsection{Results under different numbers of agents}

\begin{figure}[H]
    \centering
    \vspace{-1.0em}
    \subfigure{\includegraphics[width=0.49\columnwidth]{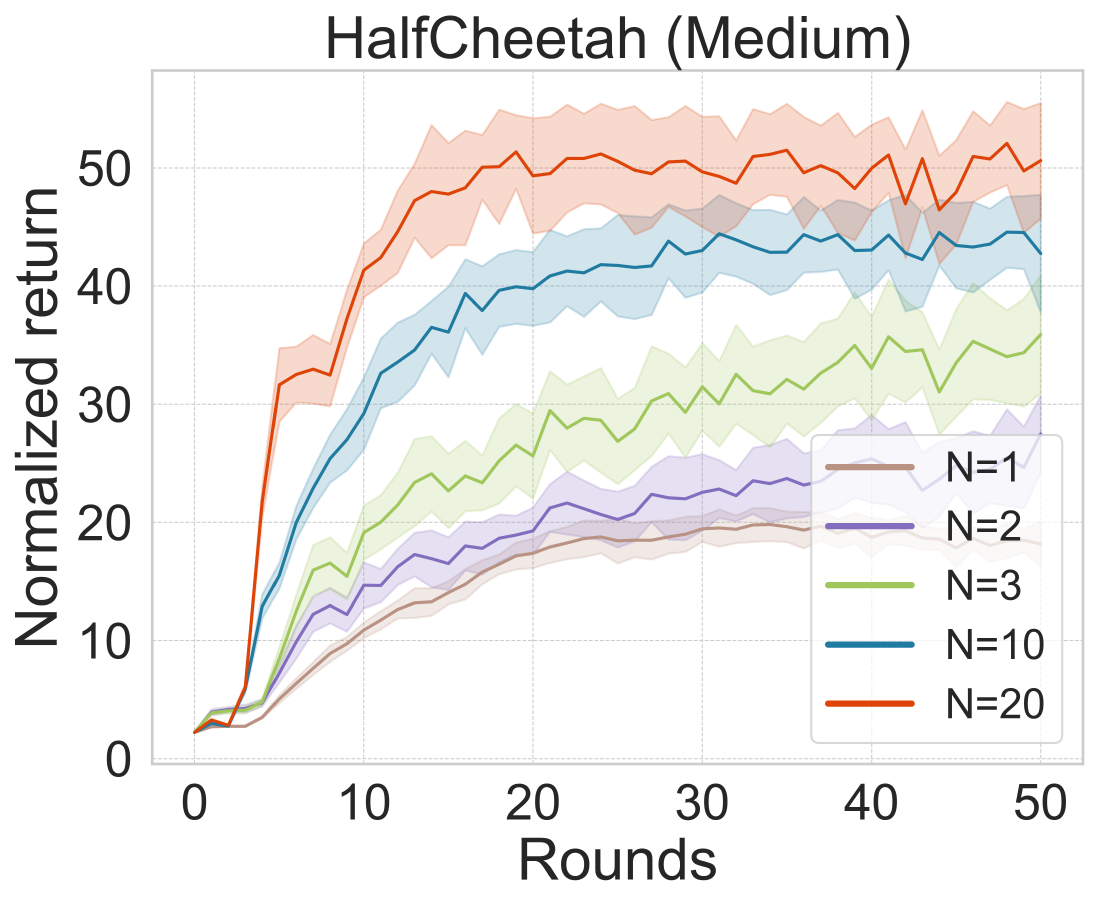}}
    \subfigure{\includegraphics[width=0.49\columnwidth]{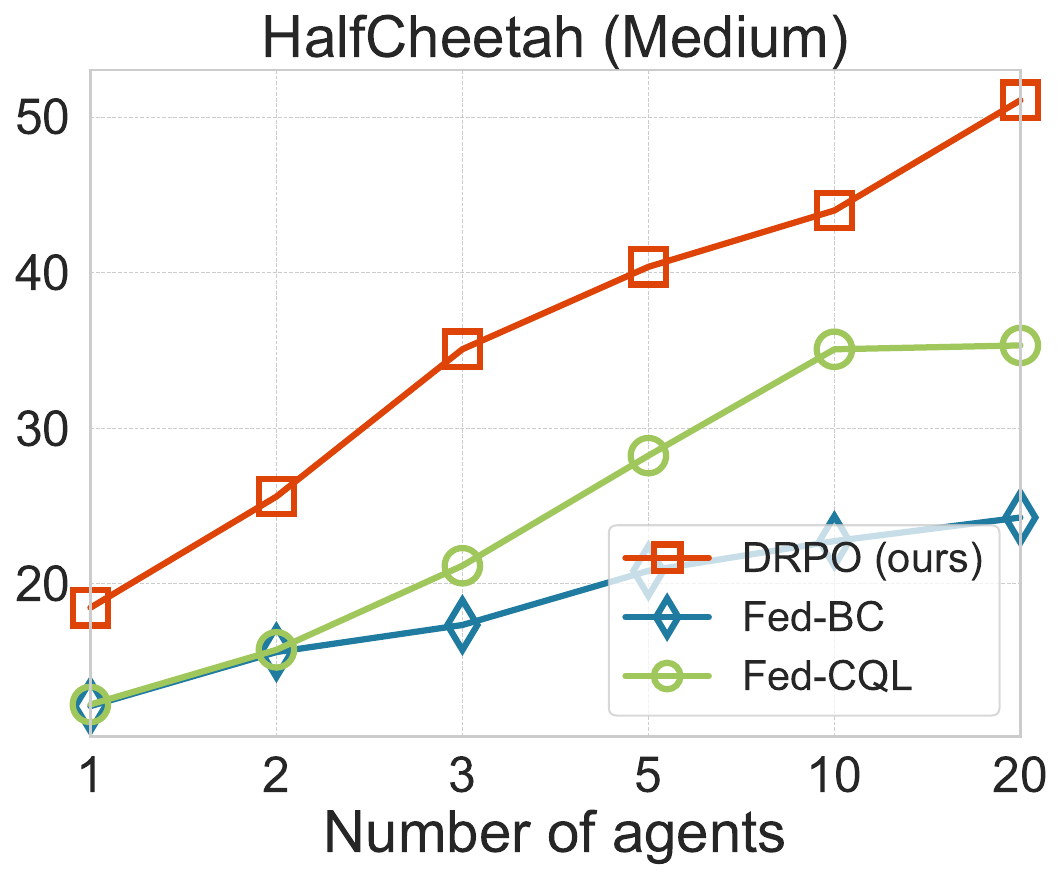}}
    \vspace{-2.0em}
    \caption{Impact of the number of agents on performance.}
    \vspace{-0.5em}
    \label{fig:num_agent}
\end{figure}

In response to the third question, we fix the number of local updating steps to 1k and the local trajectories to 5. We then vary the number of agents from 1 to 20 and present the comparative results in \cref{fig:num_agent}. It indicates that as the number of participating agents increases, the performance of \alg{} exhibits significant improvement, with the gap from baselines widening. Interestingly, the increment in the number of agents also accelerates the convergence of \alg{}, because a great number of  agents can facilitate better policy evaluation, subsequently leading to enhanced policy improvement in each round.

\subsubsection{Results under different dataset sizes}

\begin{figure}[htpb]
    \centering
    \vspace{-1.0em}
    \subfigure{\includegraphics[width=0.49\columnwidth]{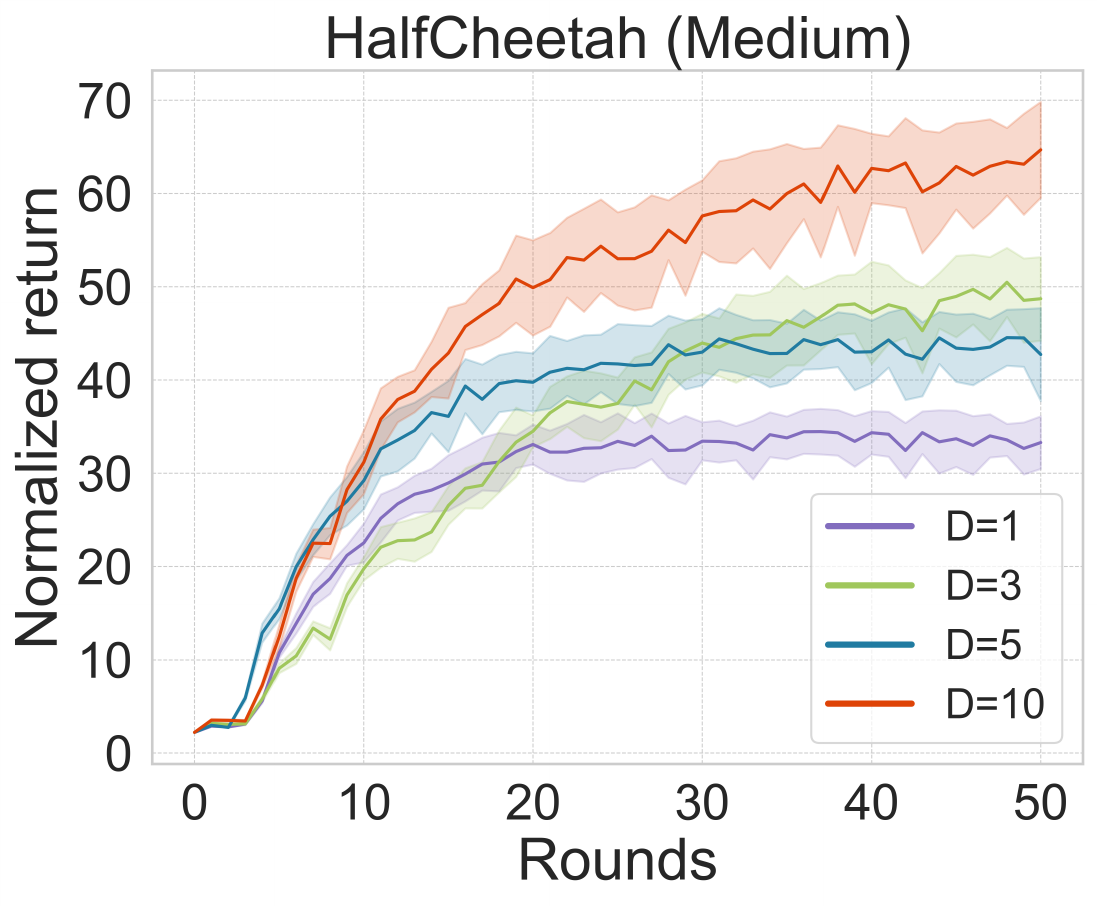}}
    \subfigure{\includegraphics[width=0.49\columnwidth]{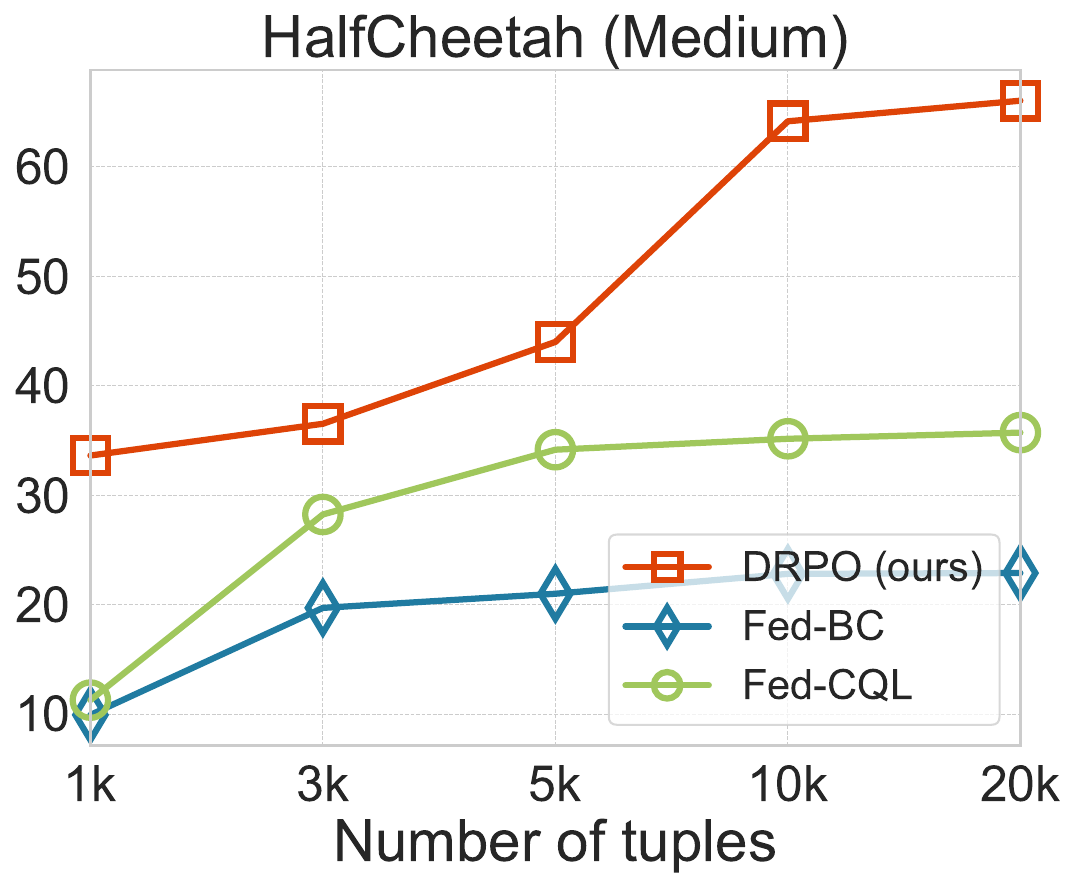}}
    \vspace{-2.0em}
    \caption{Impact of dataset sizes on performance.}
    \vspace{-0.5em}
    \label{fig:datasize}
\end{figure}

To show the impact of dataset sizes on the learning speed, we present results with varying numbers of trajectories in each agent, ranging from 1 to 20, while keeping the number of agents and local steps fixed at 5 and 1k respectively. As shown in \cref{fig:datasize}, the performance of \alg{} increases with more pre-collected data and surpasses the baselines in both small data and large data regimes. 

\subsubsection{Results under varying local steps} 

\begin{figure}[htpb]
    \centering
    \vspace{-1.0em}
    \subfigure{\includegraphics[width=0.49\columnwidth]{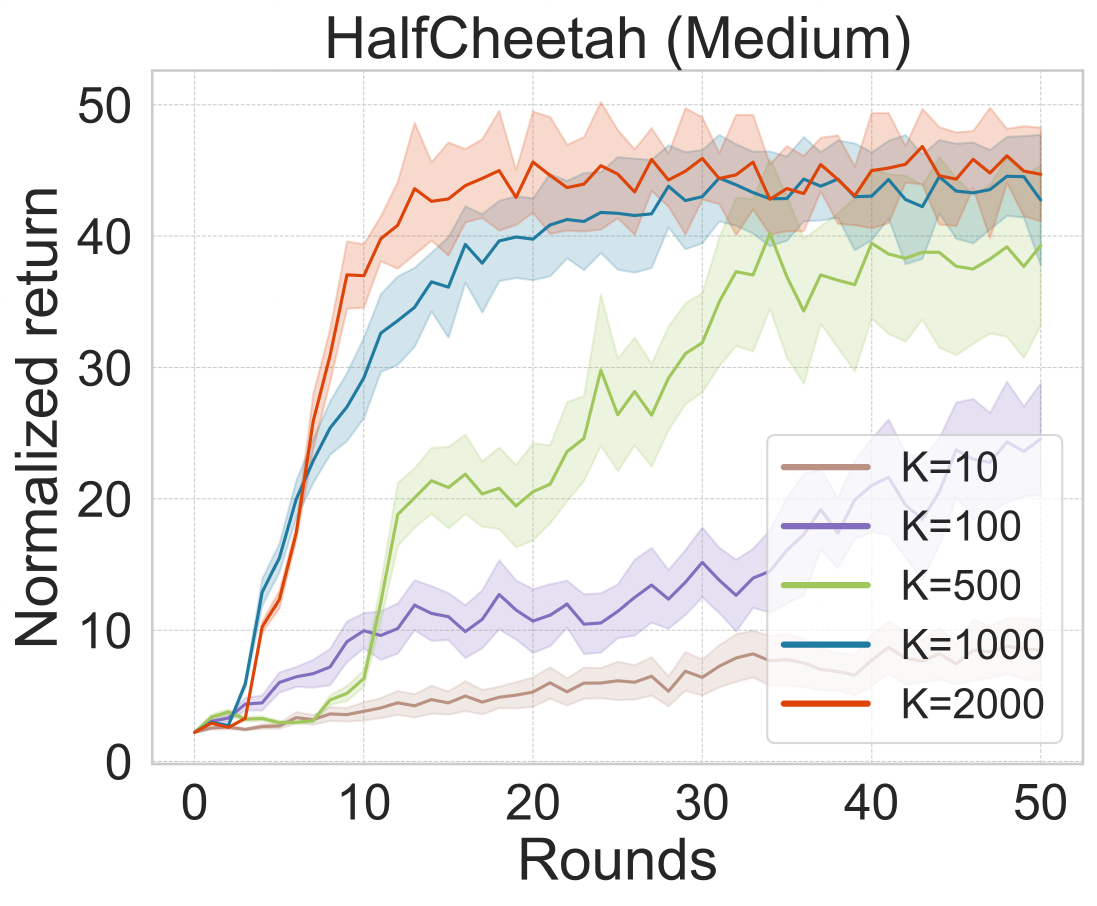}}
    \subfigure{\includegraphics[width=0.49\columnwidth]{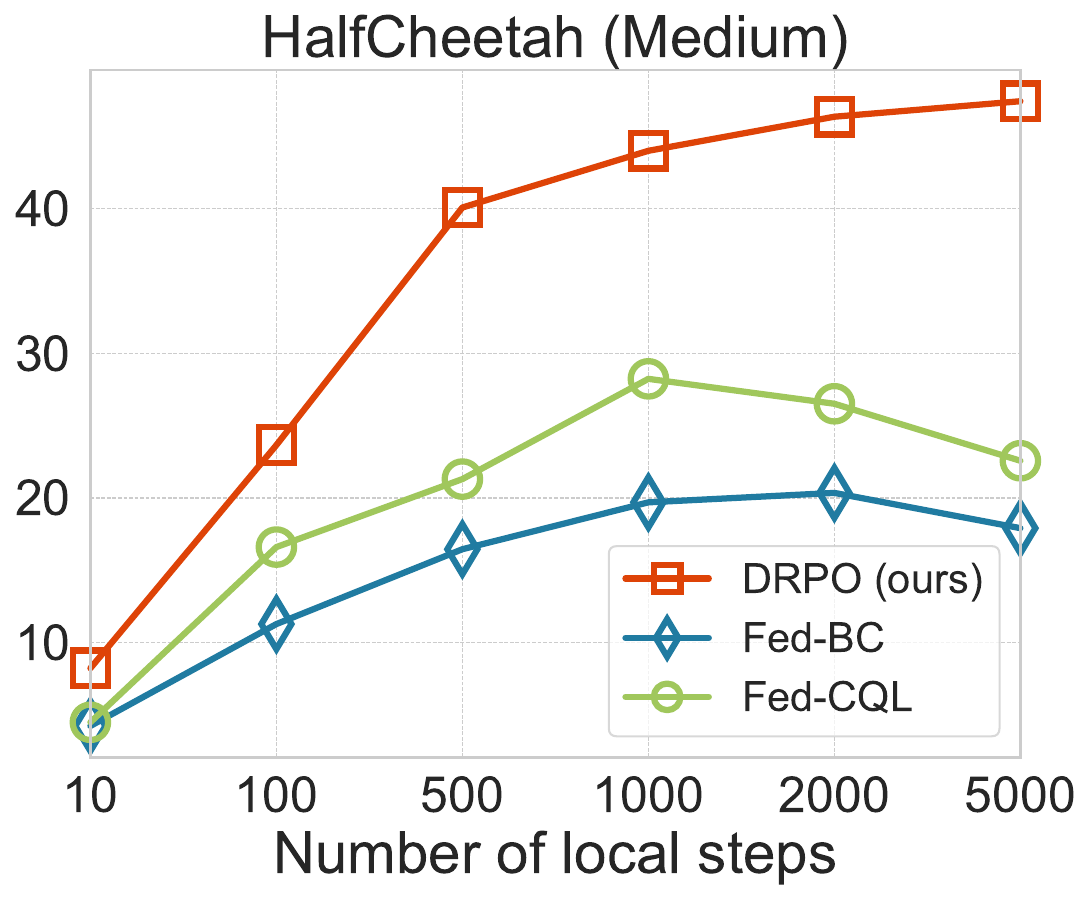}}
    \vspace{-2.0em}
    \caption{Impact of local updating steps on performance.}
    \vspace{-0.5em}
    \label{fig:local}
\end{figure}

\begin{figure*}[t]
    \centering
    \subfigure{
        \begin{minipage}[b]{0.2\textwidth}
            \centering
            \includegraphics[width=1.25\linewidth]{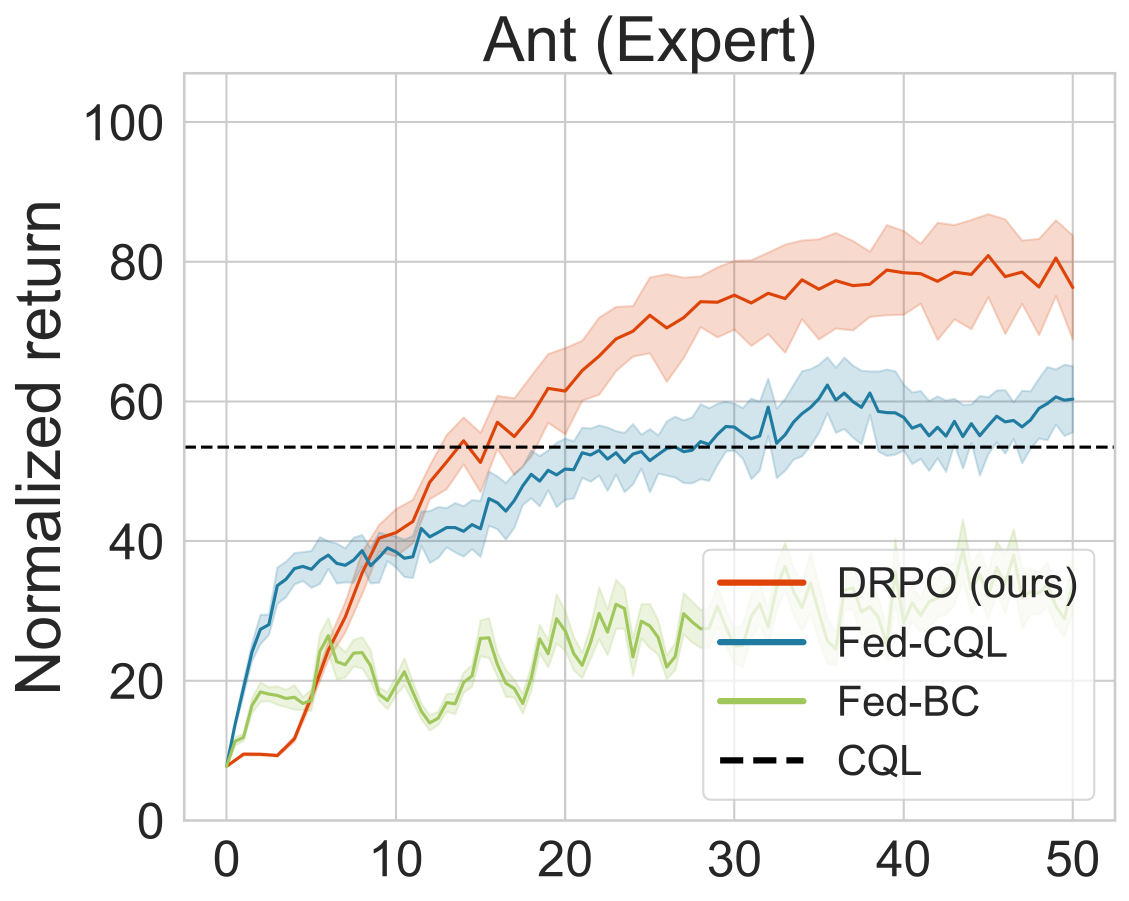}
            \includegraphics[width=1.25\linewidth]{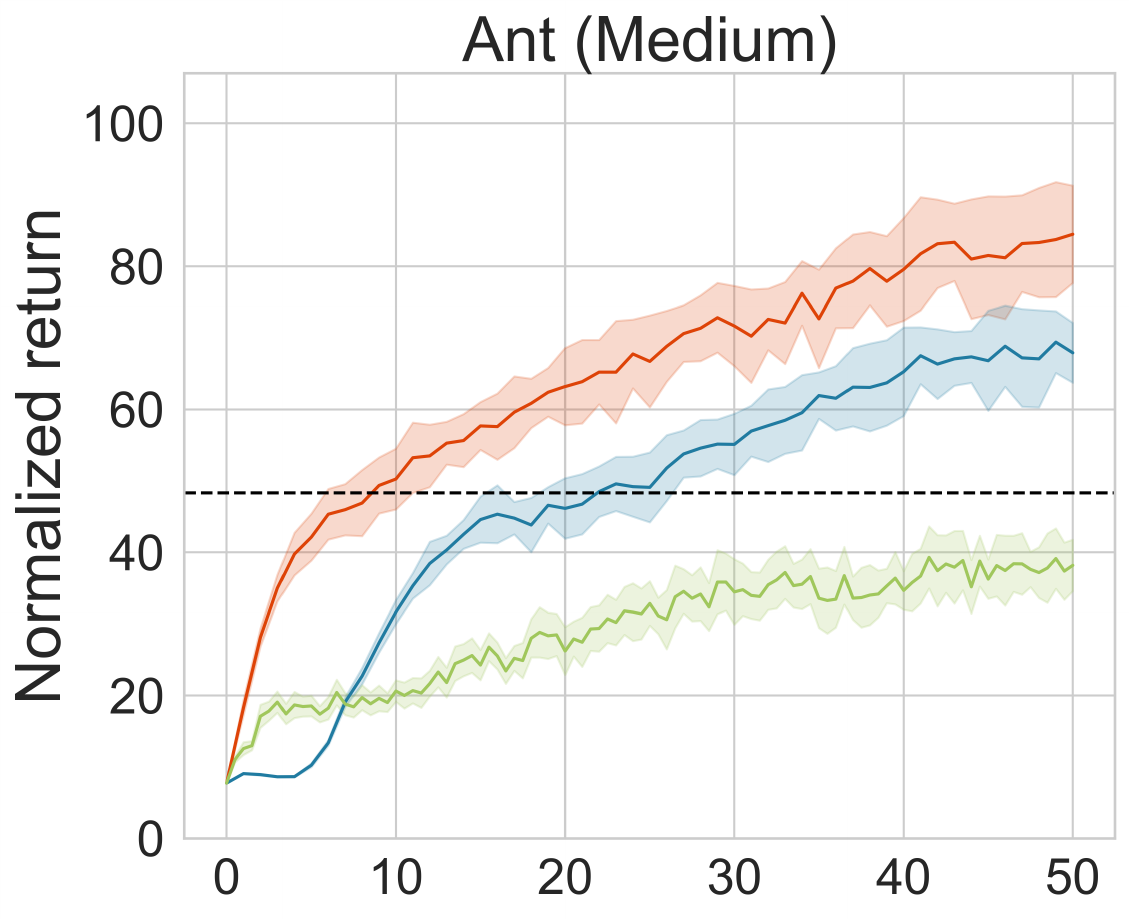}
            \includegraphics[width=1.25\linewidth]{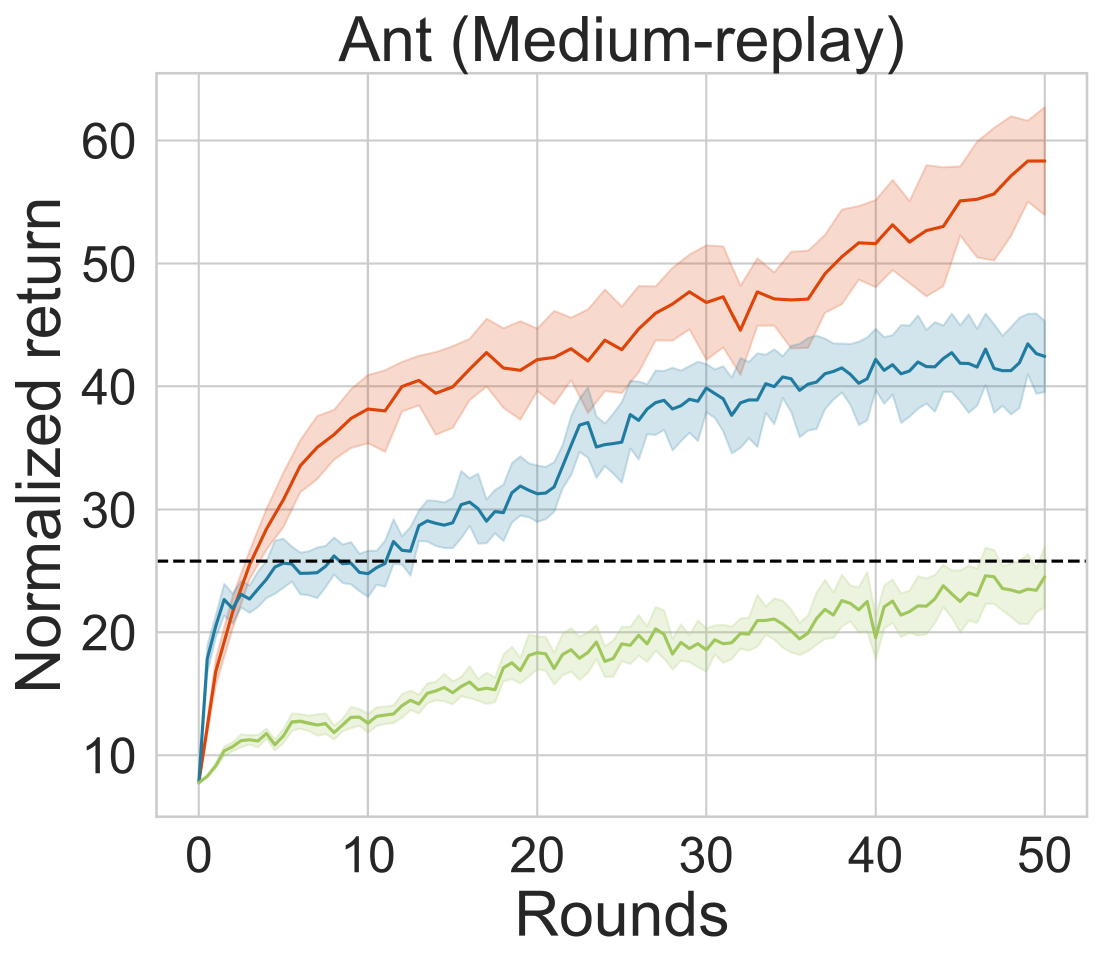}
        \end{minipage}
    }
    \hfill
    \subfigure{
        \begin{minipage}[b]{0.2\textwidth}
            \centering
            \includegraphics[width=1.25\linewidth]{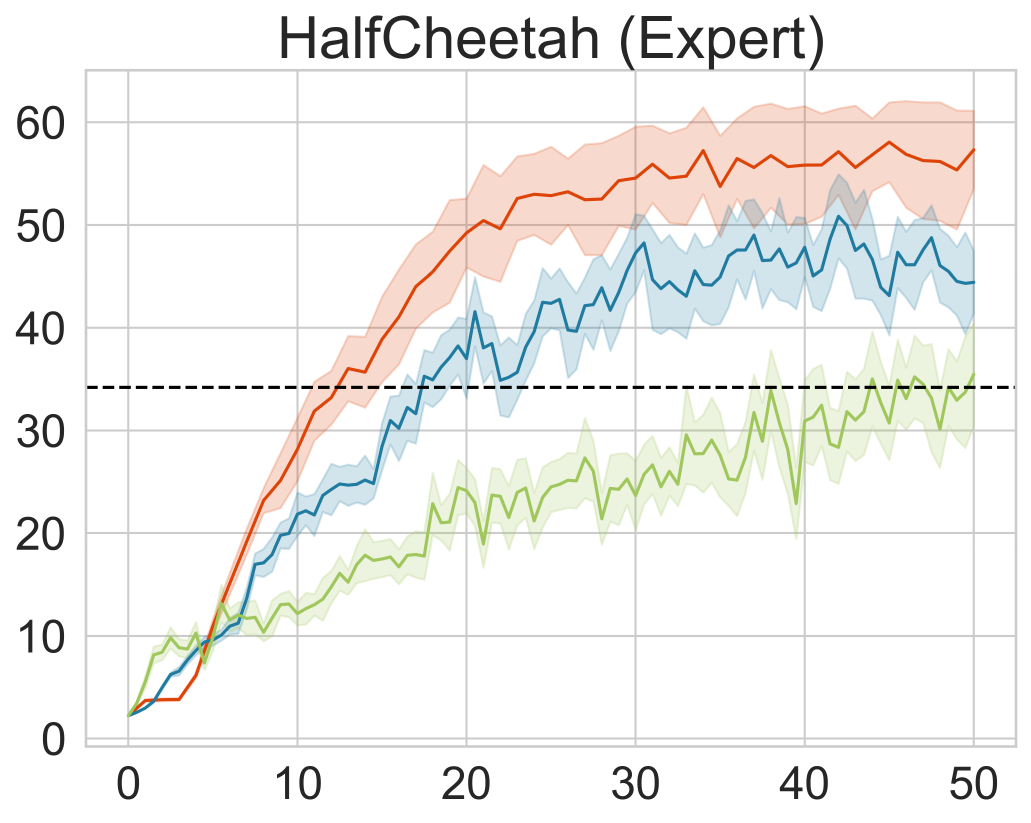}
            \includegraphics[width=1.25\linewidth]{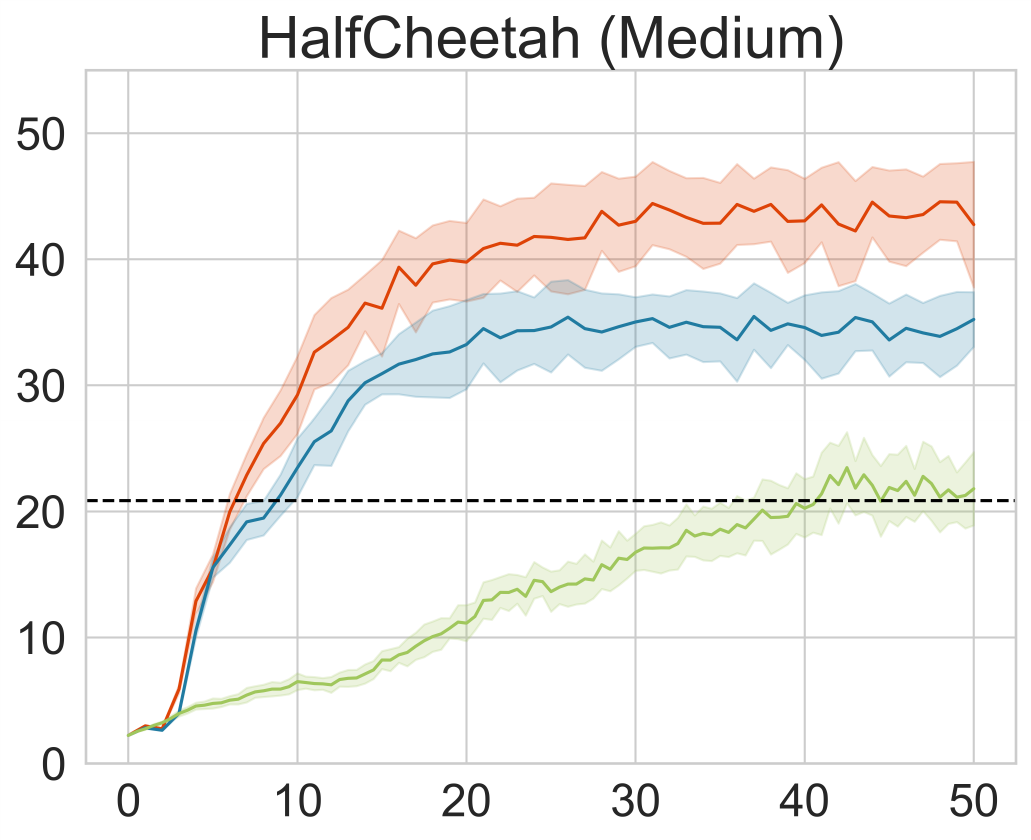}
            \includegraphics[width=1.25\linewidth]{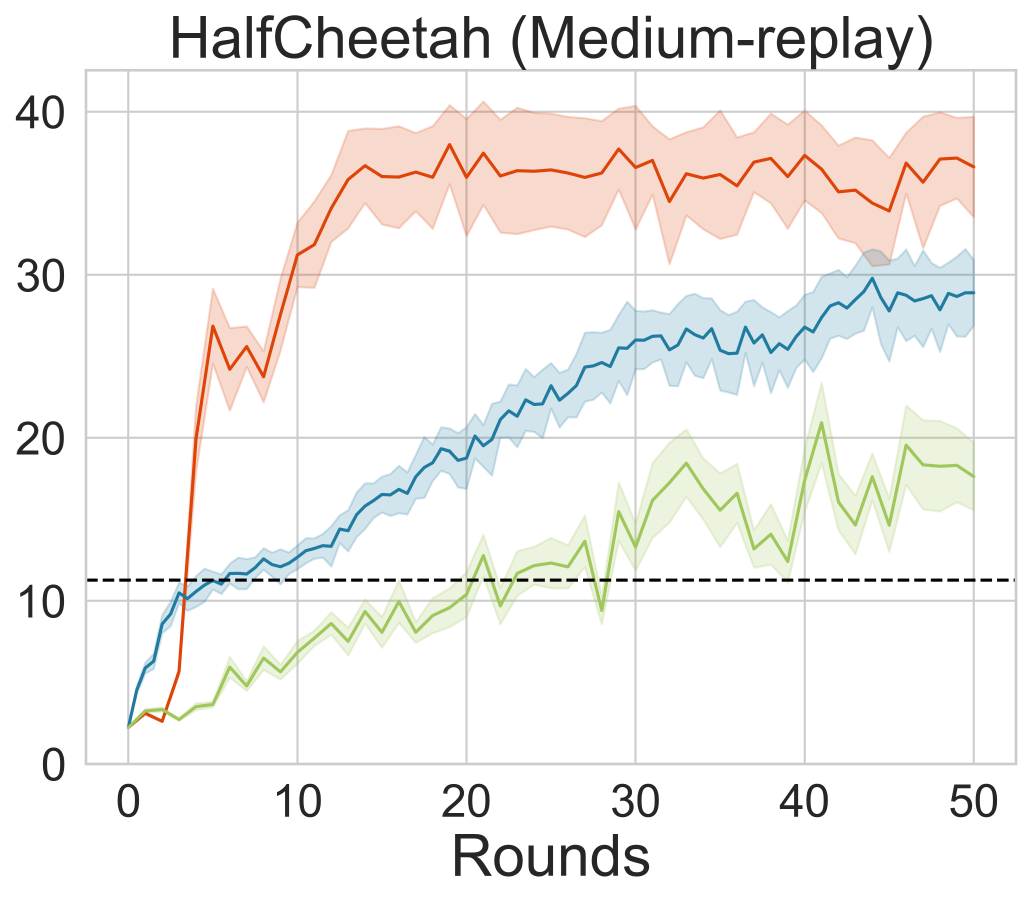}
        \end{minipage}
    }
    \hfill
    \subfigure{
        \begin{minipage}[b]{0.2\textwidth}
            \centering
            \includegraphics[width=1.25\linewidth]{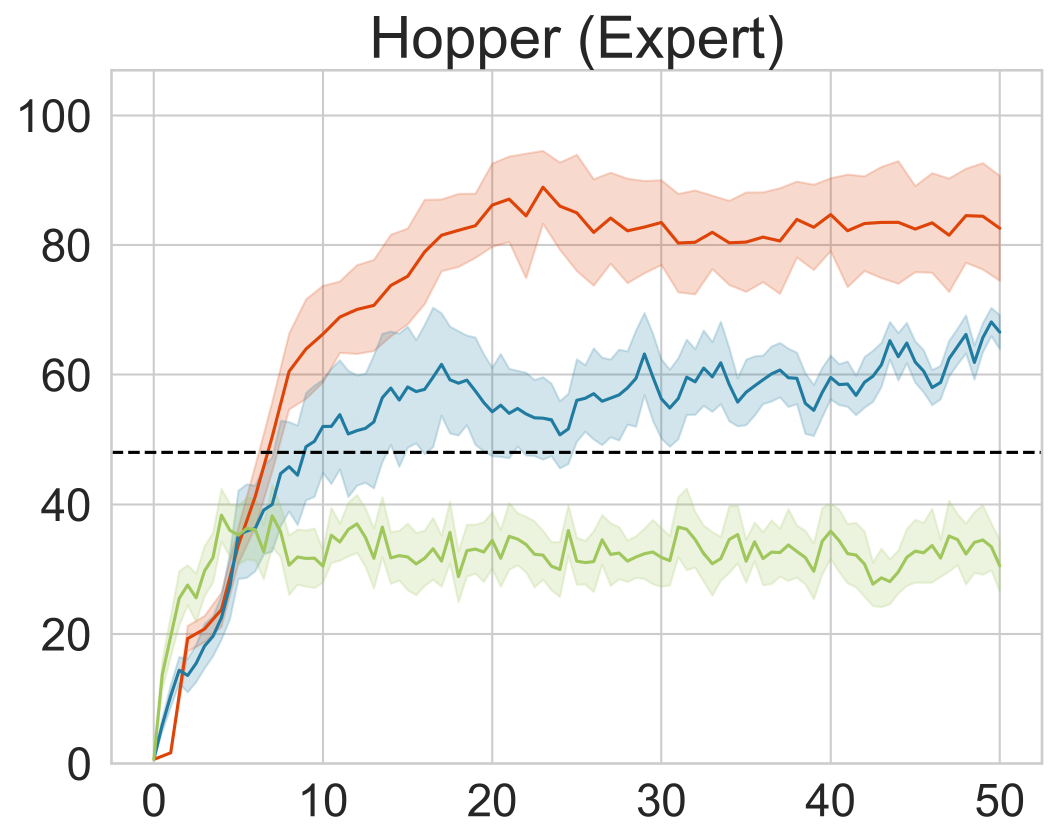}
            \includegraphics[width=1.25\linewidth]{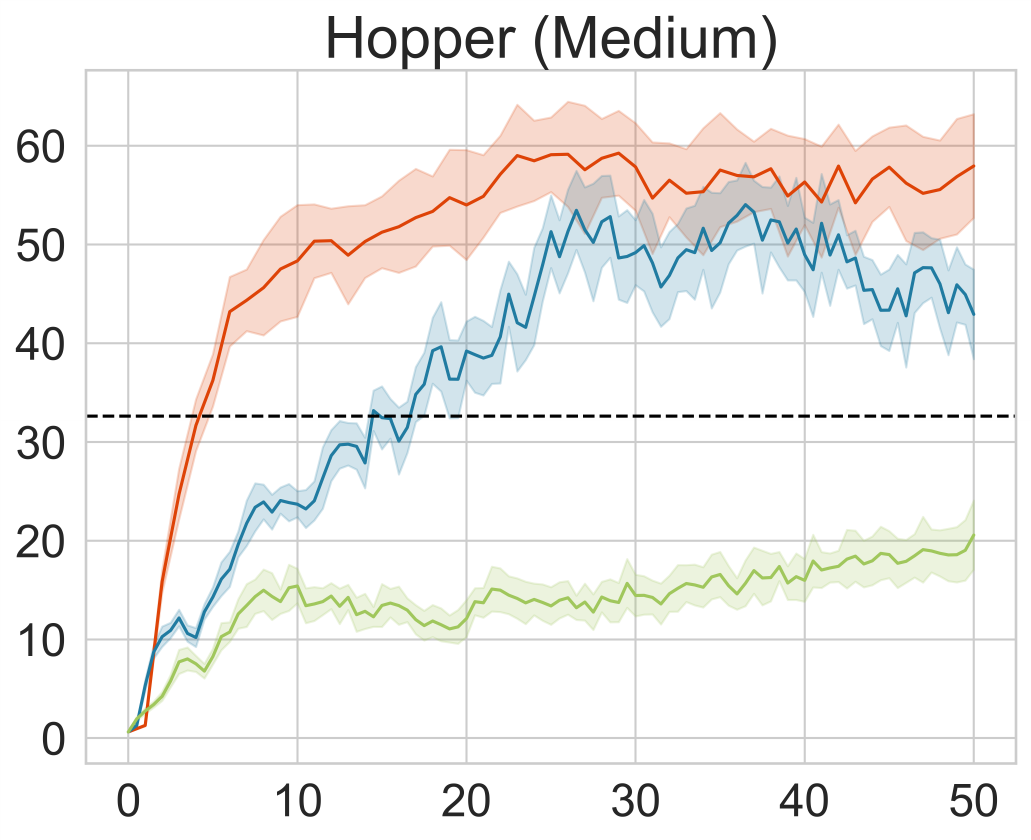}
            \includegraphics[width=1.25\linewidth]{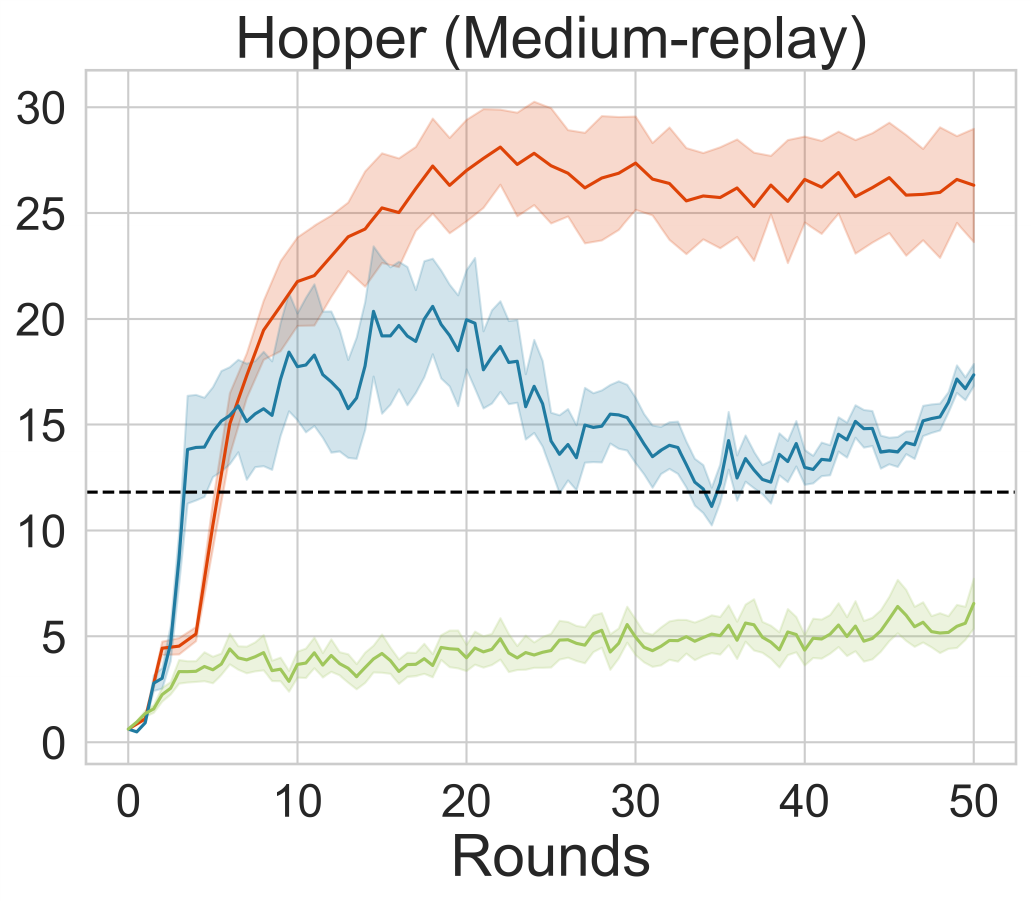}
        \end{minipage}
    }
    \hfill
    \subfigure{
        \begin{minipage}[b]{0.2\textwidth}
            \centering
            \includegraphics[width=1.25\linewidth]{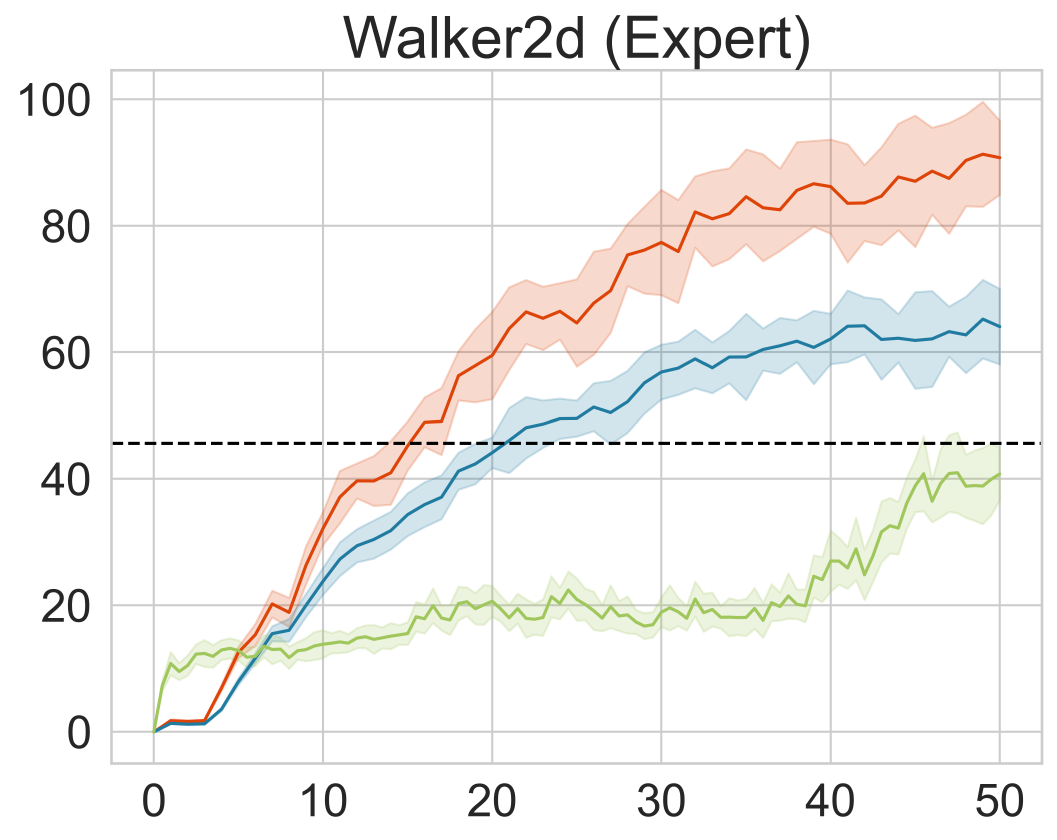}
            \includegraphics[width=1.25\linewidth]{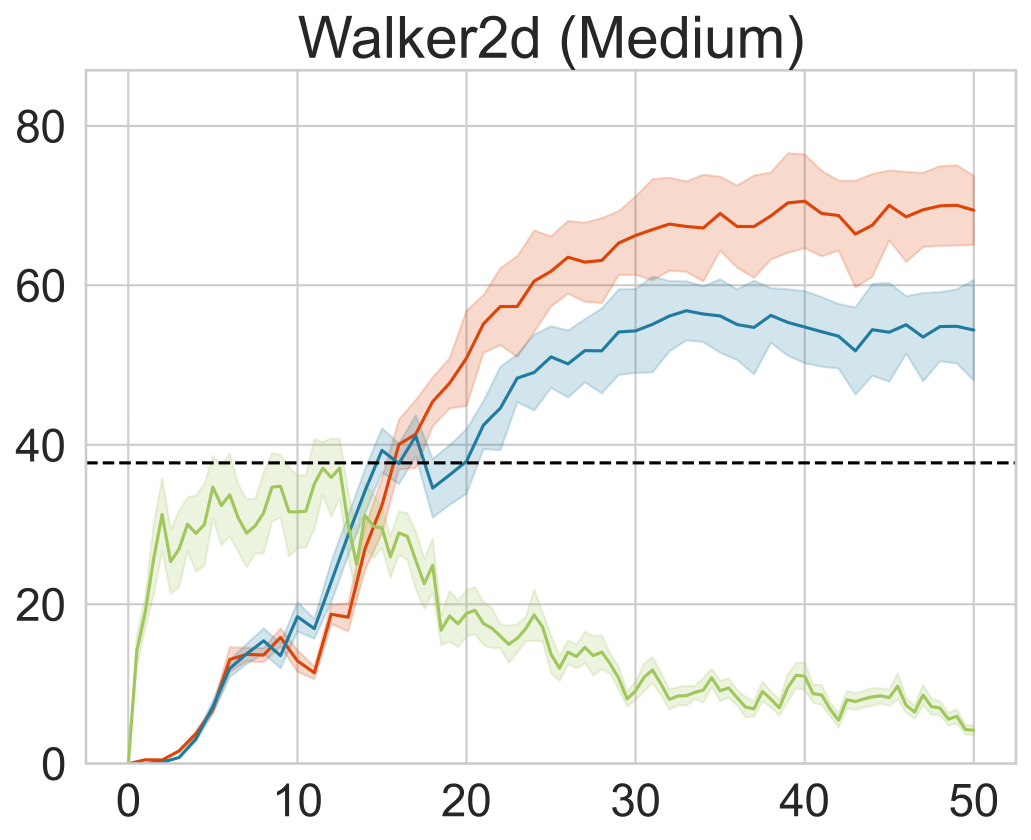}
            \includegraphics[width=1.25\linewidth]{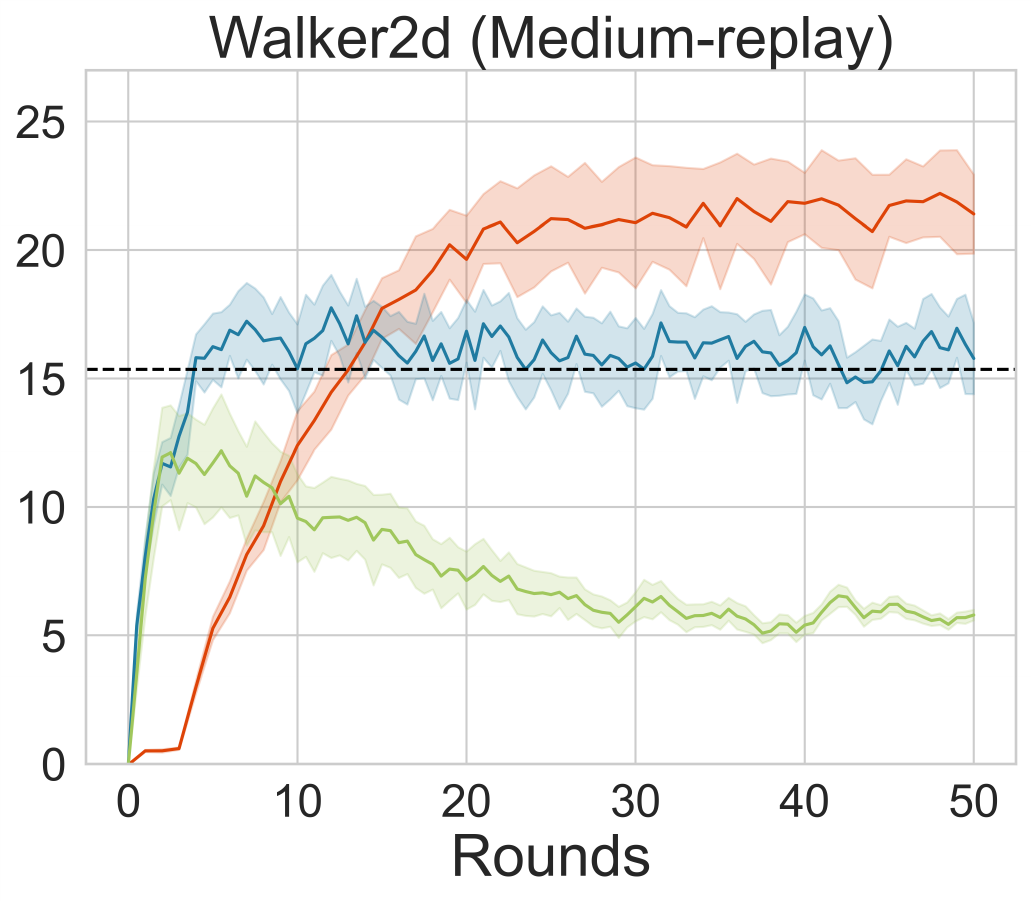}
        \end{minipage}
    }
    \hfill
    \vspace{-.5em}
    \caption{Comparison of learning speeds across different algorithms.}
    \vspace{-1.0em}
    \label{fig:convergence}
\end{figure*}

To demonstrate the impacts of local updates, we provide results with varying numbers of local steps in \cref{fig:local}, while keeping the number of agents and local trajectories fixed at 5.   \cref{fig:local} illustrates that larger local steps lead to a faster convergence speed for \alg{}. In particular, due to overfitting local data, the performances of baselines may deteriorate with an increment in local steps. In contrast, as implied by \cref{fig:local}, \alg{} with dual regularization effectively ameliorates this issue.


\subsubsection{Impact of dual regularization}

\begin{figure}[ht]
    \centering
    \vspace{-0.75em}
    \subfigure[Ablation study.]{\label{fig:ablation}\includegraphics[width=0.49\columnwidth]{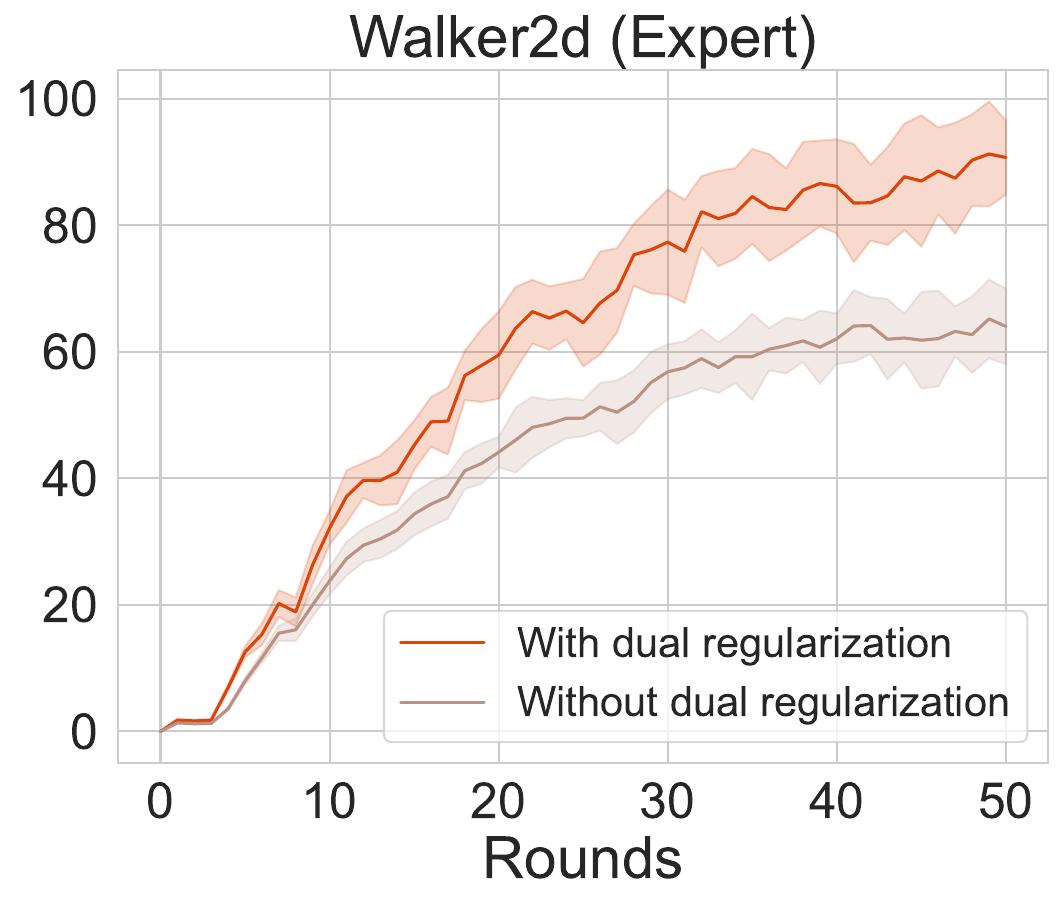}}
    \subfigure[Impacts of $\lambda_1$ and $\lambda_2$.]{\label{fig:lambda}\includegraphics[width=0.49\columnwidth]{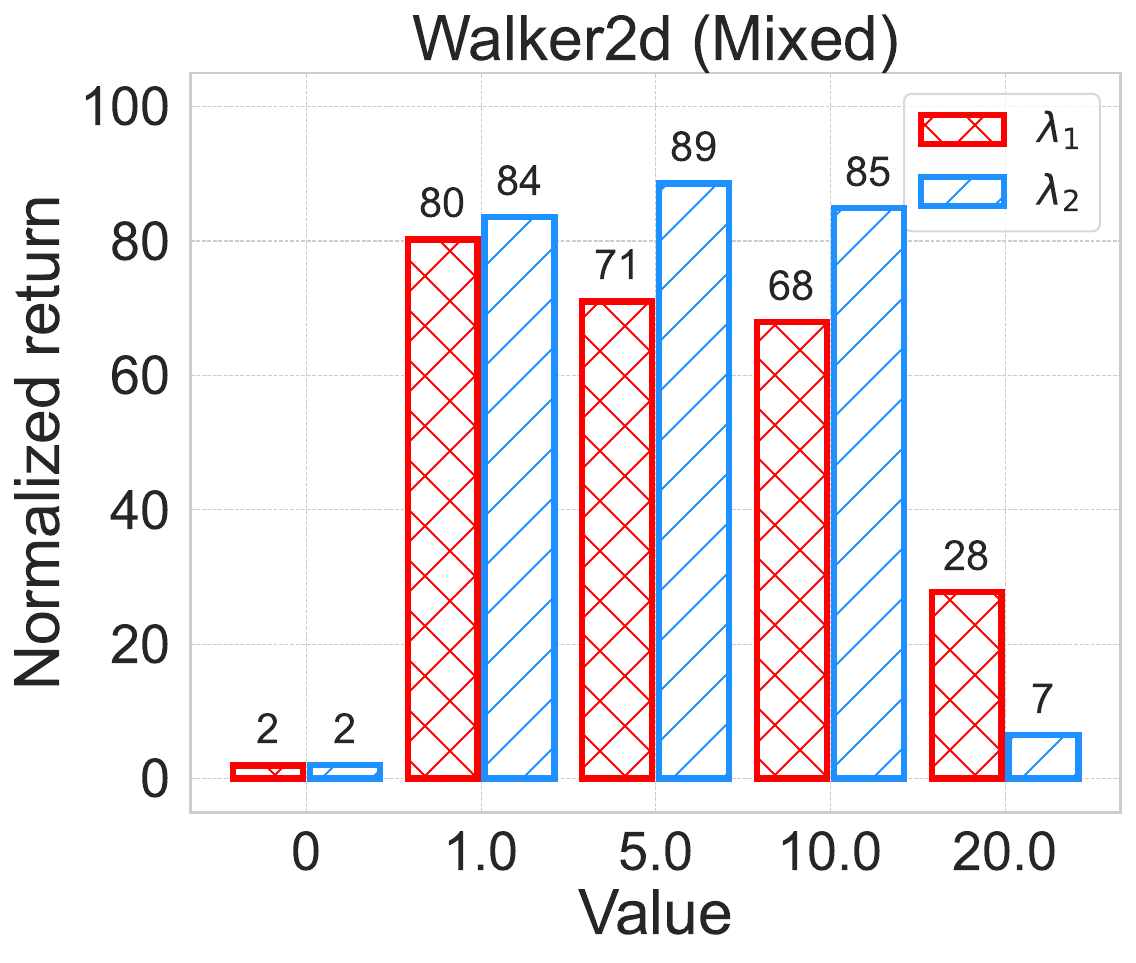}}
    \vspace{-0.75em}
    \caption{Impact of the dual regularization on performance.}
    \vspace{-0.5em}
    \label{fig:dual_regularization}
\end{figure}

To show the effectiveness of dual regularization, we carry out an ablation study by removing the two regularizers (where \alg{} reduces to \texttt{Fed-CQL}). As demonstrated in \cref{fig:ablation}, the regularization significantly contributes to both convergence and performance improvements. Further, to show the individual impacts of the two regularizers, we construct a heterogeneous setting with 4 \texttt{expert} agents and 1 \texttt{medium} agent. We first fix $\lambda_2=1$ and vary $\lambda_1$ from 0 to 20, and then fix $\lambda_1=0.1$ and vary $\lambda_2$ within the same range. As illustrated in \cref{fig:lambda}, setting $\lambda_1$ and $\lambda_2$ to values that are too smaller or too large can result in performance degradation due to the tradeoff between the exploitation of local and global information. Notably, the best result is achieved when $\lambda_1=0.1$ and $\lambda_2=5$, which aligns with our theoretical findings in \cref{thm:policy_improvement}.



\section{Conclusion}
\label{sec:conclusion}
In this paper, we present a novel offline federated policy optimization algorithm, \alg{}, that enables distributed agents to collaboratively learn a decision policy only from pre-collected private data with no need to explore the environments. The key building block of \alg{} lies in its double regularization, one based on the local state-action distribution and the other on the global aggregated policy, which can effectively tackle the inherent two-tier distributional mismatches in offline FRL. We theoretically establish the strict policy improvement guarantee for \alg{}, and empirically demonstrate the efficacy and effectiveness of \alg{} on challenging MuJoCo tasks. The authors have provided public access to the source code at 
\href{https://codeocean.com/capsule/8103588/tree/v1}{https://codeocean.com/capsule/8103588/tree/v1}.


\section*{Acknowledgments}
This research was supported in part by NSFC under Grant No. 62341201, 62122095, 62072472, 62172445, 62302260, and 62202256 by the National Key R\&D Program of China under Grant No. 2022YFF0604502, by CPSF Grant 2023M731956, and by a grant from the Guoqiang Institute, Tsinghua University.

\appendix

We provide necessary lemmas aiding in the analysis of \cref{sec:theory}. First, we bound the difference between state-action distributions w.r.t. the policy distance.
\begin{lemma}
    \label{lem:markov_tvd}
    For two MDP transition distributions $p_1(s'\vert s)$ and $p_2(s'\vert s)$, suppose the total variation distance is bounded as $\max_h \mathbb{E}_{s\sim p_{1,h}}[\TV (p_1(s'\vert s)\Vert p_2(s'\vert s))]\le\epsilon$ with $p_{1,h},p_{2,h}$ the state marginals in time step $h$. If the initial state distributions are the same, then the following fact holds:
    \begin{align}
        \TV (p_{1,h}(s)\Vert p_{2,h}(s))\le h\epsilon.
    \end{align}
\end{lemma}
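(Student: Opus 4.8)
The plan is to prove the bound by \emph{induction on the time step $h$}, exploiting the recursive (pushforward) structure of the state marginals under a fixed transition kernel. The base case $h=0$ is immediate: since the two chains share the same initial state distribution, $\TV(p_{1,0}\Vert p_{2,0})=0=0\cdot\epsilon$. For the inductive step I would assume $\TV(p_{1,h}\Vert p_{2,h})\le h\epsilon$ and aim to show $\TV(p_{1,h+1}\Vert p_{2,h+1})\le(h+1)\epsilon$.

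The key step is to write each marginal as a pushforward, $p_{i,h+1}(s')=\sum_s p_{i,h}(s)\,p_i(s'\vert s)$, and then perform an \emph{add-and-subtract} decomposition of the difference so that the two sources of discrepancy -- a mismatch in the kernels applied to a common marginal, and a mismatch in the marginals propagated through a common kernel -- are separated:
\begin{align*}
    p_{1,h+1}(s')-p_{2,h+1}(s') =\;& \sum_s p_{1,h}(s)\big(p_1(s'\vert s)-p_2(s'\vert s)\big)\\
    +\;& \sum_s \big(p_{1,h}(s)-p_{2,h}(s)\big)p_2(s'\vert s).
\end{align*}
Applying the triangle inequality (in the $L^1$/total-variation sense) then bounds $\TV(p_{1,h+1}\Vert p_{2,h+1})$ by the sum of the total-variation masses of the two terms.

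It then remains to control each term. For the first term, summing over $s'$ and recognizing the inner sum as a conditional total variation yields exactly $\mathbb{E}_{s\sim p_{1,h}}[\TV(p_1(\cdot\vert s)\Vert p_2(\cdot\vert s))]$, which is bounded by $\epsilon$ via the hypothesis $\max_h\mathbb{E}_{s\sim p_{1,h}}[\TV(p_1(\cdot\vert s)\Vert p_2(\cdot\vert s))]\le\epsilon$. For the second term, I would use that $p_2(\cdot\vert s)$ is a probability distribution, so $\sum_{s'}p_2(s'\vert s)=1$; after swapping the order of summation the term collapses to $\TV(p_{1,h}\Vert p_{2,h})$, which the induction hypothesis bounds by $h\epsilon$. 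Adding the two contributions gives $\epsilon+h\epsilon=(h+1)\epsilon$, completing the induction.

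The main obstacle -- really the only delicate point -- is ensuring the \emph{common-marginal} side of the decomposition is attached to $p_{1,h}$ (not $p_{2,h}$), so that the resulting expectation is taken under the marginal matching the hypothesis; the opposite splitting would instead produce $\mathbb{E}_{s\sim p_{2,h}}$ and break the clean telescoping. Everything else reduces to the standard observation that a stochastic kernel acts as a nonexpansion in total variation.
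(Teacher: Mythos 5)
Your proof is correct and follows essentially the same route as the paper's: the identical add-and-subtract decomposition $p_{1,h}(s)\bigl(p_1(s'\vert s)-p_2(s'\vert s)\bigr) + \bigl(p_{1,h}(s)-p_{2,h}(s)\bigr)p_2(s'\vert s)$, with the kernel mismatch averaged under $p_{1,h}$ (matching the hypothesis) and the stochastic kernel $p_2$ acting as a nonexpansion on the marginal gap. The only cosmetic difference is that you package the recursion $\TV(p_{1,h+1}\Vert p_{2,h+1})\le \epsilon + \TV(p_{1,h}\Vert p_{2,h})$ as a formal induction, whereas the paper unrolls it telescopically down to the common initial distribution; the content is the same.
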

\begin{proof}
    First we have 
    \begin{align}
        \label{eq:markov_tvd_1}
        &\,\vert p_{1,h}(s)-p_{2,h}(s)\vert\nonumber\\
        =&\,\Big\vert \sum_{s'}p_{1,h-1}(s')\cdot p_1(s\vert s')-p_{2,h-1}(s')\cdot p_2(s\vert s')\Big\vert\nonumber\\
        \le&\, \sum_{s'}\Big\vert p_{1,h-1}(s')\cdot p_1(s\vert s')-p_{2,h-1}(s')\cdot p_2(s\vert s')\Big\vert\nonumber\\
        =&\,\sum_{s'}\Big\vert p_{1,h-1}(s')\cdot p_1(s\vert s')-p_{1,h-1}(s')\cdot p_2(s\vert s')\nonumber\\
        &+p_{1,h-1}(s')\cdot p_2(s\vert s')-p_{2,h-1}(s')\cdot p_2(s\vert s')\Big\vert\nonumber\\
        \le&\,\sum_{s'}\Big(p_{1,h-1}(s')\cdot\big\vert p_1(s\vert s')-p_2(s\vert s')\big\vert\nonumber\\
        &+p_2(s\vert s')\cdot\big\vert p_{1,h-1}(s')-p_{2,h-1}(s')\big\vert\Big)\nonumber\\
        =&\,\mathbb{E}_{s'\sim p_{1,h-1}(s')}\Big[\big\vert p_1(s\vert s')-p_2(s\vert s')\big\vert\Big]\nonumber\\
        &+\sum_{s'}p_2(s\vert s')\cdot\big\vert p_{1,h-1}(s')-p_{2,h-1}(s')\big\vert.
    \end{align}
    We let $\epsilon_{h-1}\doteq\mathbb{E}_{s'\sim p_{1,h-1}(s')}\left[\TV (p_1(s\vert s')\Vert p_2(s\vert s'))\right]$. Then, based on \cref{eq:markov_tvd_1}, the following holds:
    \begin{align}
        &\TV (p_{1,h}(s)\Vert p_{2,h}(s))\nonumber\\
        =&\frac{1}{2}\sum_s \left\vert p_{1,h}(s)- p_{2,h}(s)\right\vert\nonumber\\
        \le& \frac{1}{2}\sum_s \bigg(\mathbb{E}_{s'\sim p_{1,h-1}(s')}\left[\left\vert p_1(s\vert s')-p_2(s\vert s')\right\vert\right]\nonumber\\
        &+\sum_{s'}p_2(s\vert s')\left\vert p_{1,h-1}(s')-p_{2,h-1}(s')\right\vert\bigg)\nonumber\\
        =&\frac{1}{2}\sum_s \mathbb{E}_{s'\sim p_{1,h-1}(s')}\left[\left\vert p_1(s\vert s')-p_2(s\vert s')\right\vert\right]\nonumber\\
        &+\TV (p_{1,h-1}(s')\Vert p_{2,h-1}(s'))\nonumber\\
        =&\delta_{h-1} +\TV (p_{1,h-1}(s')\Vert p_{2,h-1}(s'))\nonumber\\
        \le& \TV (p_{1,0}(s')\Vert p_{2,0}(s')) + \sum^{h-1}_{t=0}\epsilon_{t}\nonumber\\
        \le& h\epsilon,
    \end{align}
    thereby completing the proof.
\end{proof}

Building on \cref{lem:markov_tvd}, we have the following result.
\begin{lemma}
    \label{lem:occupancy_bound}
    For any MDP and policies $\pi,\bar{\pi}$, the corresponding occupancy measures satisfy
    \begin{align}
        \TV(\rho^{\pi}\|\rho^{\bar{\pi}}) \le \frac{1}{1-\gamma} \max_s \TV(\pi(\cdot|s)\|\bar{\pi}(\cdot|s)).
    \end{align}
\end{lemma}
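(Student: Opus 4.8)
The plan is to pass from policy closeness to occupancy-measure closeness through the induced state-transition kernels, so that \cref{lem:markov_tvd} can be invoked. Write $\epsilon\doteq\max_s\TV(\pi(\cdot|s)\Vert\bar{\pi}(\cdot|s))$, and let $P^\pi(s'|s)\doteq\sum_a\pi(a|s)P(s'|s,a)$ and $P^{\bar\pi}(s'|s)\doteq\sum_a\bar{\pi}(a|s)P(s'|s,a)$ be the state-to-state kernels that $\pi$ and $\bar{\pi}$ induce on the shared MDP. The first step is a one-line data-processing estimate: since both kernels feed the policy outputs through the same channel $P(s'|s,\cdot)$, swapping the order of summation over $a$ and $s'$ and using the triangle inequality gives $\TV(P^\pi(\cdot|s)\Vert P^{\bar\pi}(\cdot|s))\le\TV(\pi(\cdot|s)\Vert\bar{\pi}(\cdot|s))\le\epsilon$ for every $s$. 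Hence the hypothesis of \cref{lem:markov_tvd} holds with this $\epsilon$, and since $\pi$ and $\bar{\pi}$ start from the same $\mu_0$, the lemma yields $\TV(\rho^\pi_h\Vert\rho^{\bar\pi}_h)\le h\epsilon$, where $\rho^\pi_h(s)\doteq\Pr(s_h=s\mid P,\pi,\mu_0)$ is the time-$h$ state marginal.

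Next I would split the state-action occupancy difference into a policy part and a state-marginal part. Writing $\rho^\pi(s,a)=\varrho^\pi(s)\pi(a|s)$ with $\varrho^\pi(s)\doteq(1-\gamma)\sum_{h\ge0}\gamma^h\rho^\pi_h(s)$ the state occupancy, the identity $\rho^\pi(s,a)-\rho^{\bar\pi}(s,a)=\varrho^\pi(s)\big(\pi(a|s)-\bar{\pi}(a|s)\big)+\big(\varrho^\pi(s)-\varrho^{\bar\pi}(s)\big)\bar{\pi}(a|s)$ together with the triangle inequality gives $\TV(\rho^\pi\Vert\rho^{\bar\pi})\le\epsilon+\TV(\varrho^\pi\Vert\varrho^{\bar\pi})$: the first summand collapses because $\sum_s\varrho^\pi(s)=1$ and each inner sum $\sum_a|\pi(a|s)-\bar{\pi}(a|s)|\le 2\epsilon$, while the second uses $\sum_a\bar{\pi}(a|s)=1$.

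It then remains to bound the state-occupancy distance. Pulling the discounted average outside by convexity of $|\cdot|$, $\TV(\varrho^\pi\Vert\varrho^{\bar\pi})\le(1-\gamma)\sum_{h\ge0}\gamma^h\TV(\rho^\pi_h\Vert\rho^{\bar\pi}_h)\le(1-\gamma)\epsilon\sum_{h\ge0}h\gamma^h=\frac{\gamma}{1-\gamma}\epsilon$, where the last equality is the standard series $\sum_{h\ge0}h\gamma^h=\gamma/(1-\gamma)^2$. Combining with the previous paragraph yields $\TV(\rho^\pi\Vert\rho^{\bar\pi})\le\epsilon+\frac{\gamma}{1-\gamma}\epsilon=\frac{1}{1-\gamma}\epsilon$, which is precisely the claim.

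The only genuinely delicate point is the first step, namely correctly routing the per-step policy closeness into the transition kernels so that \cref{lem:markov_tvd} applies, together with keeping the two error contributions (the per-step policy mismatch and the accumulated state-marginal drift) from being double counted. Once the decomposition is set up, the telescoping $1+\gamma/(1-\gamma)=1/(1-\gamma)$ and the geometric-series evaluation are routine.
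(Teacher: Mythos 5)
Your proof is correct and takes essentially the same route as the paper's: both push the per-state policy gap through the induced state kernels $P^\pi(s'\vert s)=\sum_a\pi(a\vert s)P(s'\vert s,a)$ to invoke \cref{lem:markov_tvd}, both split the occupancy difference into a policy-mismatch term plus a state-marginal drift term via the product form $\rho^\pi(s,a)=\varrho^\pi(s)\pi(a\vert s)$, and both evaluate the same geometric series. The only difference is organizational—you perform the split once on the aggregated discounted occupancy, yielding $\epsilon+\frac{\gamma}{1-\gamma}\epsilon$, whereas the paper performs it at each time step and sums $(1-\gamma)\sum_h(h+1)\gamma^h\epsilon$, which is the identical computation rearranged.
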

\begin{proof}
    From the definition of occupancy measure, we get
    \begin{align}
        \TV(\rho^{\pi}\|\rho^{\bar{\pi}})\le (1-\gamma)\sum^\infty_{h=0}\gamma^h\TV(\rho^{\pi}_h\|\rho^{\bar{\pi}}_h).
        \label{eq:eq5}
    \end{align}
    Regarding $\TV(\rho^{\pi}_h\|\rho^{\bar{\pi}}_h)$, we can write 
    \begin{align}
        &\TV (\rho^{\pi}_h\|\rho^{\bar{\pi}}_h)\nonumber\\
        =&\frac{1}{2}\sum_{s,a}\left\vert \rho^{\pi}_h(s,a)- \rho^{\bar{\pi}}_h(s,a)\right\vert\nonumber\\
        =&\frac{1}{2}\sum_{s,a}\Big\vert \rho^{\pi}_h(s)\pi(a\vert s)-\rho^{\pi}_h(s)\bar{\pi}(a\vert s)\nonumber\\
        &+\rho^{\pi}_h(s)\bar{\pi}(a\vert s)- \rho^{\bar{\pi}}_h(s)\bar{\pi}(a\vert s)\Big\vert\nonumber\\
        \le&\frac{1}{2}\sum_{s,a}\rho^{\pi}_h(s)\left\vert \pi(a\vert s)-\bar{\pi}(a\vert s)\right\vert\nonumber\\
        &+\frac{1}{2}\sum_{s,a}\bar{\pi}(a\vert s)\left\vert \rho^{\pi}_h(s)- \rho^{\bar{\pi}}_h(s)\right\vert\nonumber\\
        =&\sum_s \rho^{\pi}_h(s)\cdot\frac{1}{2}\sum_a\left\vert \pi(a\vert s)-\bar{\pi}(a\vert s)\right\vert\nonumber\\
        &+\frac{1}{2}\sum_s \left\vert \rho^{\pi}_h(s)- \rho^{\bar{\pi}}_h(s)\right\vert\sum_a \bar{\pi}(a\vert s)\nonumber\\
        =&\mathbb{E}_{s\sim \rho^{\pi}_h}\left[\TV(\pi(\cdot\vert s)\Vert \bar{\pi}(\cdot\vert s))\right]+\TV(\rho^{\pi}_h(s)\Vert \rho^{\bar{\pi}}_h(s))\nonumber\\
        \le&\max_s \TV(\pi(\cdot\vert s)\Vert \bar{\pi}(\cdot\vert s))+\TV(\rho^{\pi}_h(s)\Vert \rho^{\bar{\pi}}_h(s)),
        \label{eq:eq4}
    \end{align}
    where we sightly abuse notation in $\TV(\rho^{\pi}_h(s)\Vert \rho^{\bar{\pi}}_h(s))$ and denote $\rho^{\pi}_h(s)$ as the state marginal of $\rho^{\pi}_h$. Note that
    \begin{align}
        &\max_h \mathbb{E}_{s\sim\rho^{\pi}_h(s)}\big[\TV (\dy^\pi(\cdot\vert s)\Vert \dy^{\bar{\pi}}(\cdot\vert s))\big]\nonumber\\
        \le& \max_s \TV (\dy^\pi(\cdot\vert s)\Vert \dy^{\bar{\pi}}(\cdot\vert s))\nonumber\\
        =& \max_s \frac{1}{2}\sum_{s'} \bigg|\sum_a P(s'|s,a)\Big(\pi(a|s)-\bar{\pi}(a|s)\Big)\bigg|\nonumber\\
        \le& \max_s \frac{1}{2}\sum_a \big|\pi(a|s)-\bar{\pi}(a|s)\big| \sum_{s'}P(s'|s,a)\nonumber\\
        =& \max_s \TV(\pi(\cdot|s)\|\bar{\pi}(\cdot|s)).
    \end{align}
    Hence, based on \cref{lem:markov_tvd,eq:eq4} , we can write
    \begin{align}
        \TV (\rho^{\pi}_h\|\rho^{\bar{\pi}}_h) \le (h+1)\max_s \TV(\pi(\cdot\vert s)\Vert \bar{\pi}(\cdot\vert s)).
        \label{eq:eq6}
    \end{align}
    Finally, combining \cref{eq:eq5,eq:eq6}, we obtain
    \begin{align}
        \TV(\rho^{\pi}\|\rho^{\bar{\pi}})&\le \frac{1-\gamma}{\gamma}\sum^\infty_{h=1}h\gamma^h \max_s \TV(\pi(\cdot\vert s)\Vert \bar{\pi}(\cdot\vert s))\nonumber\\
        &=\frac{1}{1-\gamma}\max_s \TV(\pi(\cdot\vert s)\Vert \bar{\pi}(\cdot\vert s)),
    \end{align}
    with the fact $\gamma<1$ and $\sum^\infty_{h=1}h\gamma^h=\gamma/(1-\gamma)^2$. 
\end{proof}


\bibliographystyle{IEEEtran}
\bibliography{reference}

\end{document}